\newtheorem{theorem}{Theorem}
\newtheorem{property}{Property}
\newtheorem{lemma}{Lemma}
\newtheorem{definition}{Definition}
\newtheorem{assumption}{Assumption}
\newtheorem{remark}{Remark}
\newtheorem{proposition}{Proposition}
\DeclareMathOperator*{\argmin}{arg\min}
\DeclareMathOperator{\Id}{Id}
\newcommand{\eqdef}{\coloneqq}
\newcommand{\Lspace}{\ensuremath{L^2( \left[ 0,1 \right], V)}}
\renewcommand{\v}{\ensuremath{\Vert}}
\newcommand{\iset}{\ensuremath{1 \leq i \leq N}}
\renewcommand{\r}{\ensuremath{\mathbb{R}}}
\newcommand{\C}{\ensuremath{\mathbb{C}}}
\newcommand{\modelname}{RKHS-NODE\xspace}
\newcommand{\modelnames}{\modelname{}s\xspace}
\newcommand{\tset}{\ensuremath{\left[ 0,1 \right]}}
\newcommand{\qint}{\ensuremath{q_{\text{\upshape int}}}}
\renewcommand{\qint}{\ensuremath{r}} % q_int was not very beautiful I think 
\newcommand{\h}{\ensuremath{\mathcal{H}}}
\newcommand{\myparagraph}[1]{\vspace{-1mm}\noindent\textbf{#1} }
\begin{document}

\title{On global convergence of ResNets: From finite to infinite width using linear parameterization}

\author{
    Raphaël Barboni\\
    ENS - PSL Univ.\\
    \texttt{raphael.barboni@ens.fr}
    \And
    Gabriel Peyré\\
    CNRS and ENS - PSL Univ.\\
    \texttt{gabriel.peyre@ens.fr}
    \And
    Fran\c{c}ois-Xavier Vialard\\
    LIGM, Univ. Gustave Eiffel, CNRS\\
    \texttt{francois-xavier.vialard@u-pem.fr}
}

\maketitle

\begin{abstract}
Overparameterization is a key factor in the absence of convexity to explain global convergence of gradient descent (GD) for neural networks. Beside the well studied lazy regime, infinite width (mean field) analysis has been developed for shallow networks, using  convex optimization techniques. To bridge the gap between the lazy and mean field regimes, we study Residual Networks (ResNets) in which the residual block has linear parameterization while still being nonlinear. Such ResNets admit both infinite depth and width limits, encoding residual blocks in a Reproducing Kernel Hilbert Space (RKHS). In this limit, we prove a local Polyak-Lojasiewicz inequality. Thus, every critical point is a global minimizer and a local convergence result of GD holds, retrieving the lazy regime. In contrast with other mean-field studies, it applies to both parametric and non-parametric cases under an expressivity condition on the residuals. Our analysis leads to a practical and quantified recipe: starting from a universal RKHS, Random Fourier Features are applied to obtain a finite dimensional parameterization satisfying with high-probability our expressivity condition. %Experiments showcase the relevance of this convergence analysis and suggest that, despite being simpler, our model also enjoys generalization performances on par with those of classical ResNets.
\end{abstract}

%\smallskip
%\noindent

\section{Introduction}

State of the art supervised learning methods are based on deep neural networks, sometimes heavily overparameterized, which perfectly fit training data or even noisy data while exhibiting good generalization properties. Such a behaviour appears as a paradox and  questions the established theory of ``bias-variance trade-off''~\cite{belkin2019reconciling}. 
That an overparameterized model can fit data perfectly comes as no surprise but this capability does not explain the observed generalization properties. Towards a better understanding of it, one first needs to understand the optimization procedure in the parameter space that selects the interpolation map. This question is tightly linked with the parameterization of the space of maps that are explored and state of the art parameterizations have emerged in the past years. One key architecture that is ubiquitous in deep learning are skip connections, heavily used in \emph{Residual Neural Networks } (ResNets)~\cite{he2016deep} and it has led to state of the art results in supervised learning. ResNets actually allow one to consider a very large number of layers~\cite{zagoruyko2016wide}.

\myparagraph{Continuous models.}
Passing to the limit of infinite depth allows the connection with continuous models (Neural ODE) for which theoretical methods and new algorithms can be designed \cite{chen2018neural, vialard2020shooting}. Indeed, the similarities between ResNet architectures and discrete numerical schemes motivated the introduction of a continuous neural ODE
\begin{align} \label{continuous_resnet}
     \Dot{z}_t = v(W_t,z_t) \quad \forall t \in \tset,
\end{align}
where $W \in L^2(\tset, \r^m)$ is the parameter of the model and $v : \mathbb{R}^m \times \mathbb{R}^q \rightarrow \mathbb{R}^q$ is a \emph{residual transformation} whose output is the \emph{residual term}. These models correspond to limiting models of a discrete ResNet whose depth $D$ tends to infinity. Therefore, their study  brings a theoretical framework for understanding deep ResNet architectures, and more generally very deep NNs~\cite{e_mean-field_2019,e_barron_2021}. Moreover, their mathematical analysis is facilitated since it allows one to leverage a large body of works and tools from analysis and in particular the theory of optimal control~\cite{pontryagin1987mathematical}. Conversely, methods from numerical analysis can bring inspiration for designing new architectures and new optimization algorithms~\cite{lu2018beyond}.

\myparagraph{RKHS parameterization.}
Most often in the literature studying the training properties of ResNets, the considered residual transformations are \emph{Multi-Layer Perceptrons (MLP)}~\cite{du2019gradient,allen2019convergence,hardt_identity_2018}. Those consist in the composition of several trained linear layers alternatively composed with a non-linear activation function. A 2-layer MLP with width $\qint$ reads:
\begin{align} \label{MLP}
    v : \left( (W,U),z \right) \mapsto W \sigma (U z) ,
\end{align}
where $U \in \r^{\qint \times q}$ and $W \in \r^{q \times \qint}$ are the parameters for the ``hidden'' and the ``visible'' layer respectively and $\sigma : \r \rightarrow \r$ is a non-linear \emph{activation function} applied component-wise. Popular activation functions are for example the ReLU or the Swish function. Provided with these activations, MLPs enjoy a nice universal approximation property as shown in the seminal work of Barron~\cite{barron_universal_1993}.

In contrast, we consider here a setting where the residual term is linear w.r.t. the parameters while still being nonlinear w.r.t the inputs. Given a feature map $\varphi : \r^q \rightarrow \r^{\qint}$, we consider as space of residuals the space:
\begin{align} \label{residual_set}
    V \eqdef \lbrace v : z \mapsto W \varphi(z) | W \in \r^{q \times \qint} \rbrace ,
\end{align}
where the matrices $W \in \r^{q \times \qint}$ are the trained parameters. Compared to~\cref{MLP}, this can be seen as an MLP where the hidden layer is fixed by introducing the feature map $\varphi : z \mapsto \sigma (U z)$ for some feature matrix $U$. As is standard, the gradient of some loss $L$ w.r.t. $W$ is computed in the sense of the Frobenius metric on the set of matrices:
\begin{align} \label{frobenius_metric}
    \forall\, W, W' \in \r^{q \times \qint}, \ \langle W, W' \rangle = \text{Tr}(W^\top W') . 
\end{align}
Such an $L^2$ penalization induces a metric structure on $V$ through the identification $v \leftrightarrow W$ in~\cref{residual_set}:
\begin{align} \label{residual_metric}
    \forall\, v,v' \in V, \ \langle v, v' \rangle_V \eqdef \langle W, W' \rangle .
\end{align}
As a finite dimensional space of continuous maps, $V$ has the structure of \emph{Reproducing Kernel Hilbert Space} (RKHS). Moreover, as pointed out in~\cite{bach2017breaking}, the space $V$ has a natural infinite width limit or mean field limit which is an infinite dimensional RKHS.

In this paper, we are interested in understanding the convergence properties of Gradient Descent (GD) on a ResNet model for which the residual layers are encoded in a -- possibly infinite-dimensional -- vector-valued RKHS $V$. For $V$ as in~\cref{residual_set}, we stress out that, as the metric on $V$ is induced by the one on $\r^{q \times \qint}$, GD on $V$ for this metric is strictly equivalent to GD on $\r^{q \times \qint}$ with the Frobenius metric. Our model is defined as follows:

\begin{definition}[RKHS Neural ODE (RKHS-NODE)] \label{def:rkhs_resnet}
Let $V$  be a RKHS of vector-fields over $\r^q$ and $A \in \r^{q \times d}$, $B \in \r^{d' \times q}$. Then for $v \in \Lspace$ and a data input $x \in \mathbb{R}^d$, the \modelname's output is  $F(v, x) \eqdef  B z_1$,
where $z$ is the solution to the \emph{forward problem}
\begin{align} \label{rkhs_forward}
%    \left\{
 %       \begin{array}{rcl}
            \Dot{z}_t  = & v_t(z_t) 
            \quad\text{and}\quad
            z_0  = A x .
 %       \end{array}
 %   \right.
\end{align}
The variable $v$ will thereafter be called \emph{control parameter}.
\end{definition}

\begin{remark}
Note that the matrices $A$ and $B$ are fixed and only the control parameter $v$ is trained. However, we argue that our approach can be simply adapted to the case where $B$ is trained, following for example the proof of~\cite{nguyen_proof_2021}. Training $A$ seems more challenging as the model is highly non-linear w.r.t. this parameter.
\end{remark}

%Let us discuss the shortcomings and potential benefits of this simplified model.
%

\myparagraph{Relevance of the RKHS model.}
%Considering such a simplified model comes with shortcomings as well as potential benefits.
%
The main difference between the model of~\cref{def:rkhs_resnet} and standard ResNets is linearity in the parameters of the residual blocks. As a comparison, a 2-layer MLP is nonlinear w.r.t. the parameters of the hidden layers. 
%While it is admittedly a simplified setting,
However, this linearity assumption does not impact the expressivity of the model, but only its training dynamic. 
\textbf{(i)} Indeed, considering $V$ to be a Random Features approximation (c.f.~\cref{RFF}) of some universal RKHS, the residual blocks are as expressive as a 2-layer MLP since both are dense in the space of continuous functions.
\textbf{(ii)} Up to the cost of adding a supplementary variable, the dynamical system parameterized by a 2-layer MLP can be expressed as a model which is linear w.r.t. its parameters~\cite[Section 3.2]{vialard2020shooting}. Only the training dynamic between these two architectures differs.
Also, this assumption of linearity in the parameters also prevents the use of normalization layers. In this direction, \cite{zhang2018fixup} has shown that ResNets without normalization but proper initialization of the weights can lead to robust training and similar performance on the train set than standard ResNets.
%\todoraph{j'ai changé la phrase suivante}
%Finally, \cref{def:rkhs_resnet} encompasses a whole variety of model: for $\varphi = \Id$ one recovers linear ResNets, whereas for a non-linear feature map $\varphi$, due to composition of the residuals, the model is highly non-linear w.r.t. both input and parameters.
Finally, the model of~\cref{def:rkhs_resnet} still retains the effect of depth and the nonlinearity w.r.t. the input. Due to composition of these residual blocks the model's output is still highly non-linear w.r.t. parameters.
For these reasons, we consider this model as an important step towards the study of the general case.

\begin{comment}

\myparagraph{Relevance of the RKHS model.}
Considering such a model comes with shortcomings as well as potential benefits.
%
The main assumption that differs from standard ResNets is linearity in the parameters of the residual blocks. As a comparison, a 2-layer MLP is nonlinear w.r.t. the parameters of the hidden layer. While it is admittedly a simplified setting, the model of~\cref{def:rkhs_resnet} still retains the effect of depth and the nonlinearity w.r.t. the input. Indeed, considering $V$ to be a Random Features approximation (c.f.~\cref{RFF}) of some universal RKHS, the residual blocks are as expressive as a 2-layer MLP as both are dense in the space of continuous functions. Moreover, due to composition of these residual blocks the model's output is still highly non-linear w.r.t. parameters.
This assumption of linearity in the parameters also prevents the use of normalization layers. In this direction, \cite{zhang2018fixup} has shown that ResNets without normalization but proper initialization of the weights can lead to robust training and similar performance on the train set than standard ResNets.
Therefore, we consider this model as a first step of study towards the general case.

\end{comment}

%We assess the practical applicability of our model for supervised learning tasks by performing experiments on real datasets in~\cref{sec:expe} but mostly consider its study as a first step towards a more general case.

In turn, this linearity in parameters naturally leads to an RKHS parameterization which has two important benefits on the theoretical side:
%
%\todo{Gab: ok -- mais du coup il manque une phrase de transition vers les 2 points ci dessous}
%
\textbf{(i)} Flows of vector-fields as implemented by our model in~\cref{rkhs_forward} have already been studied theoretically and for applications in image registration problems~\cite{younes_shapes_2010, beg2005computing,Niethammer2011}. Under some regularity assumptions on the considered RKHS $V$, one can show that the model's output corresponds to the invertible action of a diffeomorphism by composition on the input~\cite{trouve1998diffeomorphisms}. This property was already used in~\cite{salman2018deep} to implement models of \emph{Normalizing Flows}~\cite{kobyzev_normalizing_2020} with applications in generative modeling.
 %   \item 
\textbf{(ii)} There is an important literature in Machine Learning about Kernel methods~\cite{scholkopf2002learning}. In practice, various sub-sampling methods exist in order to approximate infinite-dimensional RKHSs with finite-dimensional spaces generated by \textit{Random Fourier Features }(RFF)~\cite{rahimi2007random,rahimi2008weighted}. Thereby, leveraging results on the approximation bound for RFF~\cite{sutherland2015error,sriperumbudur2015optimal}, we show that the expressiveness properties of universal kernels, such as the Gaussian kernel, can be efficiently recovered using residuals of the form~\cref{residual_set} with a finite number of neurons. 

To further support the practical applicability and the relevance of our model in comparison with standard architectures, we report in the supplementary material (\cref{sec:expe}) numerical experiments on MNIST and CIFAR10 datasets. They show that -- as predicted by our theory -- our model can be trained in these cases to almost zero loss. But more importantly, they show that our model is able to generalize well on the test dataset with performances that are similar to those of classical ResNets.

\myparagraph{Supervised learning.}
We consider a map $F$ from $\mathcal{H} \times \r^d$ to $\r^{d'}$ for some Hilbert space of parameter $\mathcal{H}$ (e.g. the model of~\cref{def:rkhs_resnet} with $\mathcal{H} = \Lspace$) and a training dataset consisting on a family of inputs $(x^i)_{\iset} \in (\r^d)^N$ and target outputs $(y^i)_{\iset} \in (\r^{d'})^N$. 
Then for every parameter $v \in \mathcal{H}$, we define the associated \emph{Empirical Risk} as:
\begin{align} \label{empirical_risk}
    L(v) \eqdef \frac{1}{2N} \sum_{\iset} \v F(v,x^i) - y^i \v^2 .
\end{align}

\begin{remark} \label{PL_losses}
For simplicity we consider here the Euclidean square distance as a loss on the output space $\r^{d'}$, but our results generalize to any smooth loss satisfying a Polyak-Lojasiewicz inequality (c.f.\cite{bolte_characterizations_2009}), e.g. any smooth strongly convex loss.
\end{remark}

Training the model $F$ then amounts to finding a parameter $v^* \ \in \argmin_{v \in \mathcal{H}} L(v)$. In order to perform such an \emph{empirical risk minimization (ERM)} task we consider GD on $v$. For a small step size $\eta$, for some initialization $v^0 \in \mathcal{H}$ and for every discrete time step $k \in \mathbb{N}$, the training dynamic reads:
\begin{align*}
%    \left\{
%        \begin{array}{rcl}
            v^{k+1}   =  v^k - \eta \nabla L(v^k) .
%            \quad\text{and}\quad
%            v^0  \in \mathcal{H}.
%        \end{array}
%    \right.
\end{align*}
Note that we do not consider any additional regularizing term on the loss. In classical supervised learning one would seek for a minimizer of the ``regularized'' loss $L(v) +  \lambda \mathcal{R}(v)$, with $\lambda > 0$ a constant and $\mathcal{R}$ a coercive regularization function. We are here interested in the non regularized setting, i.e. $\lambda = 0$, often used in practice.
In this case, the generalization property of the computed map is argued to potentially come from the optimization method that shall select an adequate minimizer of the loss. This implicit regularization depends on the choice of the optimization method~\cite{neyshabur2017implicit}.
%Analysis of its convergence then only relies on the properties of $V$ such as expressivity and regularity.

\section{Related works and contributions} \label{sec:contributions}

Recently, several works have addressed the problem of proving convergence of (stochastic) GD in the training of NNs.
If the convergence properties of GD are well understood for NNs that are linear w.r.t. input~\cite{hardt_identity_2018,bartlett2018gradient,zou2019global}, it is not the case for non-linear NNs.
In~\cite{li2017convergence,li2018learning,du2018gradient}, the authors focus on the training of ``shallow'' two layers fully connected NNs and establish convergence of GD in an overparameterized setting where width of the intermediary layer scales polynomially with the size $N$ of the dataset. More recently, with the same
%problem
setup, \cite{zhou2021local} showed that the neurons of a teacher network are recovered by a student network optimized with GD as long as the width of the student network is higher than the teacher's one. Formally, their analysis is  similar to ours as the result holds if the loss at initialization is already sufficiently low and the proof relies on Polyak-Lojasiewicz inequalities verified by the loss landscape. 

\myparagraph{Infinite depth.} 
The works of~\cite{du2019gradient,allen2019convergence,zou2019global,lee2019wide,zou_gradient_2020,liu2020linearity,chen2020much,nguyen_proof_2021} extend those results to arbitrary deep NN in the overparameterized setting. Specifically, the results in \cite{du2019gradient,allen2019convergence,liu2020linearity} apply to deep ResNets. The best result seems to be achieved in~\cite{nguyen_proof_2021}, with convergence as soon as the last layer has a width $m = \Omega(N^3)$ and at best with linear width. A common feature for those works is to rely on the fact that, for a sufficiently high number of parameters, the model can be well approximated by a linear model corresponding to its first order expansion around the initialization. In~\cite{chizat2019lazy} this phenomenon, called ``lazy regime'', is attributed to an inappropriate scaling of the parameters. On the other hand, \cite{liu_loss_2021,liu2020linearity} refer to this phenomenon as ``linear'' or ``kernel regime'' and relate it the constancy of the \emph{Neural Tangent Kernel (NTK)} introduced in~\cite{jacot_neural_2021}. However, in all those works the width of intermediary layers has to depend on the depth $D$ of the network. Therefore, these results do not apply to the training of the model in~\cref{continuous_resnet}, corresponding to the limit $D \rightarrow +\infty$.

\myparagraph{Infinite width.} 
The other direction of over-parameterization, analyzed in several works~\cite{mei2018mean,chizat2018global,mei2019mean,javanmard_analysis_2020,lu2020mean,fang2021modeling,pham2020global} is to consider the limit of infinitely wide layers. In such a ``mean-field'' setting, the model is parameterized by the distribution of the parameters at each layer. In~\cite{chizat2018global,mei2018mean,mei2019mean,javanmard_analysis_2020} the training dynamic is analyzed as a gradient flow in the Wasserstein space~\cite{ambrosio2008gradient}, showing that the only stationary distributions are global minimizers of the empirical risk. In~\cite{fang2021modeling} a similar result is showed for deep NN with an arbitrary number of infinitely wide layers. In~\cite{chizat2021sparse,akiyama2021learnability}, local linear convergence towards the global optimum is shown for two layers NNs in a teacher-student setup with regularized loss. Finally, \cite{lu2020mean} analyzes the convergence of continuous ResNets with infinitely wide residual layers and shows that every critical point is a global minimizer of the empirical risk. We stress out that these
%convergence
results only apply to infinitely wide NNs. It is not clear if this mean-field limit extends to the parametric setting of MLPs with the Euclidean metric on their parameters. In contrast, a RKHS structure naturally arises when considering a linear parameterization of the residuals. \Cref{ass:admissibility} and~\cref{ass:universality} can be satisfied both in a parametric setting with a finite number of features and in a mean-field setting limit where the residuals are generated by a universal kernel.

\myparagraph{Contributions.} We show convergence results for GD in the training of \modelnames (see~\cref{def:rkhs_resnet}). These correspond to infinitely deep continuous ResNets with linear parameterization of the residuals.
% \begin{itemize} \item 
Our first main contribution, in~\cref{sec:main}, shows that under some regularity and expressivity assumptions on the residuals, the associated empirical risk satisfies a (local) Polyak-Lojasiewicz~\cref{prop:PL_rkhs}. A consequence is~\cref{thm:main}, which states global convergence of GD towards a global optimum  (zero training loss) under the condition that the loss at initialization is already sufficiently low. In the limit where the loss at initialization is arbitrarily small, we recover a linear regime as described in~\cite{liu_loss_2021,liu2020linearity}.
%    \item 
Our second contribution, in~\cref{sec:convergence}, shows how this condition for global convergence can be enforced using suitably chosen first and last linear layers. Thereafter, we show how the assumptions of~\cref{thm:main}, can be satisfied for RKHSs generated by a finite number of Random Features, with high probability over the choice of these features.
%
%\todo{Etre + precis (ref thm + donner qq details du statement)}
%
For any dataset $(x^i,y^i)_{\iset} \in (\r^d \times \r^{d'})^N$, we conclude in~\cref{thm:global_convergence} to convergence of GD towards a global minimum of~\cref{empirical_risk} with high probability when the width of the layers scales polynomially w.r.t. the size of the dataset $N$ and the inverse input data separation $\delta^{-1}$.

%\todoraph{ci-dessous paragraphe sur le lien avec les modèles linéaires}
Finally, we point out that some of our results can be seen as a generalization of existing results concerning convergence of GD for the training of linear NNs~\cite{hardt_identity_2018,bartlett2018gradient,zou2019global}. We explain in~\cref{sec:linear} how, following the line of our analysis, one can for example recover~\cite[Theorem 3.1.]{zou2019global}. However, if~\cref{def:rkhs_resnet} encompasses linear ResNets as a special case, we stress that~\cref{thm:main} applies to a way larger class of models.
%In particular, it is an important feature of our work to provide an analysis structure that can be easily adapted to different models or data-structure.

% \end{itemize}
%\todofx{Raph: j'ai rajouté ce paragraphe pour les contrib}
%Finally, we assess the practical applicability of those results on real datasets in~\cref{sec:expe}. While trained on MNIST we observe that in addition to reaching low training loss, our model generalizes well on a test dataset. Moreover, the number of parameters being fixed, the performances of our model are of the same order as when hidden layers are trained.

\myparagraph{Notations.} In what follows $\v . \v$ denotes the Euclidean $\ell^2$ norm for vectors and the Frobenius norm for matrices. For matrices the spectral norm is denoted $\v . \v_2$, the smallest (resp. greatest) singular value is denoted $\sigma_{\min}$ (resp. $\sigma_{\max}$) and for symmetric matrices the smallest (resp. greatest) eigenvalue is denoted $\lambda_{\min}$ (resp. $\lambda_{\max}$). Given some Hilbert space $\mathcal{H}$, the functional Hilbert space $L^2(\tset, \h)$ is denoted $L^2(\h)$ or $L^2$ when there is no ambiguity. The notation $\mathcal{O}$ (resp. $\Omega$) means asymptotically inferior (resp. superior) up to multiplicative constant.

\section{Analysis of convergence for overparameterized models} \label{sec:overparameterized}

In this section, we review methods for analyzing the convergence of overparameterized machine learning models based on~\cite{liu_loss_2021,liu2020linearity}. We refer to~\cref{sec:proof_overparameterized} for detailed proofs of the statements. 
%A parametric machine learning model is described as the action of a parameter $v$ in a Hilbert space $\h$ on an input object $x$, belonging to an input space $\mathcal{N}$. The resulting object $y$, belongs to the output space $\mathcal{M}$ of dimension $d'$. This action is represented by:
%\begin{align*}
%    F : 
%    \left\{
%        \begin{array}{ccc}
%            \h \times \mathcal{N} & \rightarrow & \mathcal{M}  \\
%            (v,x) & \mapsto & F(v,x) =: y ,
%        \end{array}
%    \right. 
%\end{align*}
%where we assume $F$ to be differentiable w.r.t. $v$.

%Provided a differentiable loss function $\ell : \mathcal{M} \times \mathcal{M} \rightarrow \mathbb{R}$, we are interested in the task of minimizing the associated empirical risk: 
%\begin{align} \label{pb:general_emp}
%    \text{Find} \ v^* \in \argmin_{v \in \mathbb{R}^m} L(v) \eqdef \frac{1}{N }\sum_{i=1}^N \ell (F(v,x^i) , y^i)
%\end{align}
%for a family of labeled input data $(x^i)_{1 \leq i \leq N} \in \mathcal{N}^N$ and objective data $(y^i)_{1 \leq i \leq N} \in \mathcal{M}^N$. Such an optimization task is performed by considering the GD dynamic :
%\begin{align} \label{gradient_descent}
%    v^{k+1} = v^k - \eta \nabla L(v^k), \ \forall k \in \mathbb{N},
%\end{align}
%with an initialization $v^0 \in \h$ and a step-size $\eta > 0$.
As presented above, we consider an optimization over the variable $v$ in some Hilbert space $\mathcal{H}$, with fixed input and output data, say $v \mapsto F(v) \eqdef [F(v,x_i)]_{i=1,\ldots,N}$. Therefore, the empirical risk is a function of the parameters $v \in \mathcal{H}$. We say that the model is \emph{overparameterized} whenever the dimension $\text{dim}(\mathcal{H})$ of the parameter space  is much larger than the dimension of the output space of $F(v)$, here $d'N$. The \modelname model defined $F$ in~\cref{def:rkhs_resnet} falls into this category as $\mathcal{H}$ is the infinite dimensional functional space $L^2(\tset, V)$.

\subsection{A (local) Polyak-Lojasiewicz property}

When dealing with overparameterized models, one cannot expect the loss to be convex but one expects the model to perfectly fit the data, that is to reach the global minimum value of $0$. In fact, for a sufficient number of parameters, the loss landscape typically possesses a continuum of infinitely many global minima and is non-convex in any neighbourhood of a global minima~\cite{liu_loss_2021}. One thus rather needs to rely on a set of functional inequalities allowing to control the decrease rate of the loss along GD~\cite{lojasiewicz1982trajectoires,bolte_characterizations_2009}.

\begin{definition}[(local) Polyak-Lojasiewicz property] \label{def:PL_gen}
Let $L: \mathcal{H} \rightarrow \r_+$ be a differentiable function. We say that $L$ satisfies a (local) Polyak-Lojasiewicz (PL) property if there exist positive continuous functions $m,M : \r_+ \rightarrow \r_+^*$ s.t. for every $v \in \mathcal{H}$
\begin{align} \label{PL_ineq_gen}
%    \begin{array}{ccc}
%        \v \nabla L(v) \v^2 & \geq & 2 M(\v v \v) L(v) ,  \\
        2 m(\v v \v) L(v) \leq \v \nabla L(v) \v^2 \leq  2 M(\v v \v) L(v)  .
%    \end{array}
\end{align}
\end{definition}

Such functional inequalities have already shown to be relevant for proving convergence guarantees in the training of NNs~\cite{frei2021proxy}. A first consequence for a loss $L$ which satisfies the (local) PL property of~\cref{def:PL_gen} is that it does not admit any spurious local minima but only global minima. Also, if the training dynamic is bounded, then $m$ and $M$ are uniformly lower- and upper-bounded along the dynamic, implying that $L$ decreases at a linear rate. In most cases,  $m$ and $M$ are degenerate when $\v v \v \rightarrow +\infty$. When the dynamic is not bounded, $L$ can thus decrease to $0$ slower than at a linear rate or even converge towards a strictly positive limit.

\subsection{Local convergence result}

Because of the degeneracy of $m$ and $M$, it is in general not possible to conclude an unconditional convergence of GD towards a global minimizer of the empirical risk.
However, PL inequalities are sufficient to prove convergence when the problem is not too hard to solve, that is when the loss at initialization is not too high.
Moreover, when using gradient descent stepping, one needs to make a supplementary smoothness assumption on the empirical risk $L$. This ensures that the loss decreases at each gradient step for a sufficiently small step size.

\begin{definition}[Smoothness, Definition 2 of~\cite{liu_loss_2021}] \label{def:smoothness}
Let $\beta \geq 0$ be a constant. We say that the function $L : \mathcal{H} \rightarrow \r$ is $\beta$-smooth if for every $v,v' \in \mathcal{H}$:
$
    \left| L(v') - L(v) - \langle \nabla L(v), v'-v \rangle \right| \leq \frac{\beta}{2} \v v'-v \v^2 .
$
\end{definition}

The local PL property combined with this smoothness assumption then gives a local convergence result for the convergence of GD towards a global minimizer of the empirical risk.

\begin{theorem}[Theorem 6 of~\cite{liu_loss_2021}] \label{thm:convergence_gen}
Let $L : \mathcal{H} \rightarrow \r_+$ be a loss function satisfying a local PL property with local constants $m$ and $M$. Let $v^0 \in \mathcal{H}$ and $R \geq 0$ be such that
\begin{align} \label{init_cond_gen}
    2 \sqrt{2}\frac{\sqrt{M(\v v^0 \v + R)}}{m(\v v^0 \v + R)}\sqrt{L(v^0)} \leq R .
\end{align}

%\todo{modified the statement for readability + space}
Furthermore, assume that $L$ is $\beta$-smooth within the ball $B(v^0,R)$. Then for a step size $\eta \leq \beta^{-1}$, GD with initialization $v^0$ and step size $\eta$ converges towards a global minimizer of $L$ with a linear convergence rate and inside a ball of radius $R$. More precisely, for every $k \geq 0$:
\begin{align} \label{convergence_rate_gen}
    L(v^k) \leq (1 - m(\v v^0 \v + R) \eta )^k L(v^0) \quad \text{and} \quad \v v^k - v^0 \v \leq R, \ \forall k \geq 0 .
\end{align}
\end{theorem}

\section{Properties of \modelname} \label{sec:main}

In this section we analyze the convergence of GD in the training of the infinitely deep ResNet model of~\cref{def:rkhs_resnet}. Note that such a model is overparameterized in depth as the parameter space is the infinite dimensional space $\Lspace$ and overparameterization can also come from width when the RKHS is high (or even infinite) dimensional.  Therefore, our proof of convergence %leverages the tools presented in the previous section and in particular 
heavily relies on a PL property verified by the empirical risk.

Recall that we consider the training of deep ResNets with a linear parameterization of the residuals. The set of residuals is as in~\cref{residual_set} with the metric of~\cref{residual_metric} induced by the Frobenius metric (\cref{frobenius_metric}). This provides $V$ with a RKHS structure~\cite{aronszajn1950theory}, whose associated kernel is given for any $z,z' \in \r^q$ by
$
    K(z,z') \eqdef \langle \varphi(z), \varphi(z') \rangle \Id_q, 
$
and whose associated feature map is given by $\varphi$.

\begin{remark}
The definition of $\langle ., . \rangle_V$ in~\cref{residual_metric} requires $\text{\upshape Span}( \varphi(\r^q) ) = \r^{\qint}$ to associate each $v \in V$ to a unique $W \in \r^{q \times \qint}$. This is satisfied by all the feature maps $\varphi$ we consider in the following.
\end{remark}

Given a training dataset composed of input data points $(x^i)_{\iset} \in (\r^d)^N$ and of target data points $(y^i)_{\iset} \in (\r^{d'})^N$ we are interested in the task of minimizing the empirical risk of~\cref{empirical_risk} by GD over $v$. Analogously to back-propagation in discrete NNs architectures, the gradient of $L$ can be expressed thanks to a backward equation derived by adjoint sensitivity analysis~\cite{pontryagin1987mathematical}.

\begin{property} \label{prop:grad}
Let $L$ be the empirical risk in~\cref{empirical_risk} associated with the \modelname model with a quadratic loss. Let $K$ be the kernel function associated with the RKHS $V$. Then $L$ is differentiable on $\Lspace$, with for every $v \in \Lspace$, 
$
    \nabla L(v) = \sum_{i=1}^N K(.,z^i) p^i,
$
where for each index $i \in \llbracket 1, N \rrbracket$, $z^i$ is the solution of~\cref{rkhs_forward} with initial condition $A x^i$ and the \emph{adjoint variable} $p^i$ is the solution to the \emph{backward problem}:
\begin{align} \label{rkhs_backward}
        \Dot{p}^i_t = - Dv_t(z^i_t)^\top p^i_t
        \quad \text{and} \quad
        p^i_1 = - \frac{1}{N} B^\top ( B z^i_1 - y^i ) .
\end{align}
\end{property}

%\todo{removed a line}
%This explicit formulation of the gradient is directly used to prove the PL property.

\subsection{PL property of \modelname}

Following the line of proof sketched in~\cref{sec:overparameterized}, we show how to derive PL inequalities of the form~\cref{PL_ineq_gen} for the empirical loss associated with the \modelname model. For that purpose we make a few assumptions about the RKHS $V$. The first one concerns its regularity and allows us to control the solutions of~\cref{rkhs_forward,rkhs_backward}.

\begin{assumption}[(strong) Admissibility] \label{ass:admissibility}
We say that the RKHS $V$ is \emph{(strongly) admissible} if it is continuously embedded in $W^{2,\infty}(\r^q, \r^q)$. More precisely, there exists a constant $\kappa > 0$ s.t.
\begin{align} \label{embedding}
    \forall v \in V, \quad
    \v v \v_\infty + \v Dv \v_{2,\infty} + \v D^2 v \v_{2,\infty} \leq \kappa \v v \v_V .
\end{align}
%We note this embedding $V\hookrightarrow W^{2,\infty}(\r^q,\r^q)$.
\end{assumption}

Assuming $V$ is embedded in $W^{1,\infty}(\r^q, \r^q)$ is natural to ensure the regularity of the flow generated by the control parameter~\cite{trouve1998diffeomorphisms, younes_shapes_2010} and suffices to prove convergence of a continuous gradient flow on the parameter $v$. \Cref{ass:admissibility} is a bit stronger because a supplementary smoothness result on the loss landscape is necessary to prove convergence of discrete GD (c.f.~\cref{def:smoothness}). In practice, 
%
%\todo{standard kernels encountered in ML are admissible, and their associated constant}
%
$\kappa$ can be computed for smooth kernels thanks to~\cref{prop:kappa} in~\cref{sec:convergence_proof}. For example, the RKHS associated with the Gaussian kernel $k : r \mapsto e^{-r^2/2}$ is (strongly) admissible with $\kappa = 2+\sqrt{3}$.

The second assumption is related to the expressiveness of $V$ and is a weaker form of the classical universality property of RKHSs.

\begin{assumption}[$N$-universality] \label{ass:universality}
Let $K$ be the kernel function associated with the RKHS $V$. For a family of points $(z^i)_{\iset} \in (\r^q)^N$, we define the associated kernel matrix as the block matrix $\mathbb{K}((z^i)_i) \eqdef (K(z^i, z^j))_{1 \leq i,j \leq N}$.
 \\
More precisely we assume for every $\delta > 0$:
\begin{align}\label{Lambda}
    \Lambda \eqdef \!\!\!  \underset{(z^i) \in (\r^q)^N}{\sup} \lambda_{\max} (\mathbb{K}((z^i)_i) ) < +\infty
    \quad\text{and}\quad
    \lambda(\delta^{-1}) \eqdef 
    \!\!\!\!\!\!   
    \underset{\substack{ (z^i) \in (\r^q)^N \\ \min_{ i \neq j }  \v z^i - z^j \v \geq \delta } }{\inf}
    \!\!\!  
    \lambda_{\min}(\mathbb{K}((z^i)_i) )  > 0 \,.
\end{align}
\end{assumption}

\Cref{ass:universality} is required in order to ensure the expressivity of our model, quantified by the conditioning of the kernel matrix $\mathbb{K}$ and by $\Lambda$ and $\lambda$.  The choice of the RKHS $V$ may thus have a significant impact on training.
In particular, satisfying~\cref{ass:universality} requires having $V$ of dimension $m \geq N$, but it can be satisfied for finite dimensional RKHSs of dimension $m \leq N^q$, for example by considering a polynomial kernel, or by RKHSs of dimension $m \geq poly(N,q)$ with high probability on the sampling of random features, as shown in~\cref{sec:convergence}. On the other hand, even though the existence of $\lambda$ follows from compactness arguments, it seems to be hardly analytically tractable even for classical kernels such as the Gaussian kernel.
Therefore, if, in theory, prior knowledge of the data distribution might allow to optimize the choice of kernel, we expect the selection of an optimal kernel to be an intractable problem in practice. Instead, cross-validation techniques can be used to select a suitable kernel.

\begin{remark}
For a RKHS $V$ as in~\cref{residual_set}, the properties of $V$ only depend on $\varphi$. An interesting example is when $\varphi : z \mapsto \sigma( U z)$ with $\sigma$ an activation function applied component-wise and $U$ a fixed feature matrix. In~\cref{sec:convergence} we show that, when considering the complex activation $\sigma : t \mapsto e^{- \imath t}$, both assumptions can be satisfied with high probability. On the other hand, \cref{ass:admissibility} is not satisfied when considering $\sigma = \text{ReLU}$ due to its non-smoothness at $0$.
\end{remark}

\begin{remark}
Note that $\Lambda$ could also be allowed to depend on some parameters, such as $\max \v z^i \v$. However, as it is a more critical aspect of our analysis, we prefer to highlight the dependency of $\lambda$ w.r.t. $\min_{i \neq j} \v z^i - z^j \v$. For all the RKHSs studied here we always have $\Lambda \leq N$.
%\todo{modified end of remark}
\end{remark}

The following PL property is satisfied by the risk~$L$. \Cref{prop:PL_rkhs} is proven in~\cref{subsec:proof_PL_rkhs}.

\begin{property}[\modelname satisfy PL] \label{prop:PL_rkhs}
Assume $V$ satisfies~\cref{ass:admissibility} with $\kappa$ and~\cref{ass:universality} with $\lambda$ and $\Lambda$. Let $L$ be the empirical risk in~\cref{empirical_risk} associated with the \modelname model of~\cref{def:rkhs_resnet}. Then $L$ satisfies the PL inequalities of~\cref{def:PL_gen} with $m$ and $M$ given by:
\begin{align} \label{PL_ineq_rkhs}
    M(R) = \frac{1}{N} \sigma_{\max}(B^\top)^2 \Lambda e^{2 \kappa R} \,, \quad
    m(R) = \frac{1}{N} \sigma_{\min}(B^\top)^2 \lambda \left( \sigma_{\min}(A)^{-1} \delta^{-1} e^{\kappa R} \right) e^{-2 \kappa R} ,
\end{align}
where $\delta \eqdef \min_{i \neq j} \v x^i - x^j \v$ is the data separation. 
\end{property}

\begin{proof}[Sketch of proof]
\Cref{ass:admissibility} can be used to have estimates on the solutions $z^i$ of the forward problem~\cref{rkhs_forward} and on the solutions $p^i$ of backward problem~\cref{rkhs_backward}. This gives for every indices $i,j \in \llbracket 1, N \rrbracket$ and every $t \in \tset$:
\begin{align*}
    \v z^i_t - z^j_t \v \geq \sigma_{\min}(A) \v x^i - x^j \v e^{- \kappa \v v \v_{L^2}} ,
\end{align*}
where $z^i$ solves~\cref{rkhs_forward} with initial condition $A x^i$, and:
\begin{align*}
    e^{-2 \kappa \v v \v_{L^2}} \v p^i_1 \v^2 \leq \v p^i_t \v^2 \leq e^{2 \kappa \v v \v_{L^2}} \v p^i_1 \v^2. 
\end{align*}
Moreover using the initial condition $p^i_1 = -\frac{1}{N} B^\top (B z^i_1 - y^i)$ we have:
\begin{align*}
    \frac{2 \sigma_{\min}(B^\top)^2}{N} L(v) \leq \sum_{i=1}^N \v p^i_1 \v^2 \leq \frac{2 \sigma_{\max}(B^\top)^2}{N} L(v) .
\end{align*}

Then denoting $\Tilde{p}_t$ the vector of stacked $p^i_t$ and using properties of RKHSs, we have for $t \in \tset$:
\begin{align*}
    \v \nabla L(v)_t \v^2 = \sum_{1 \leq i,j \leq N} (p^i_t)^\top K(z^i_t, z^j_t) p^j_t 
     = \langle \Tilde{p}_t, \mathbb{K} ((z^i_t)_i)) \Tilde{p}_t \rangle , 
\end{align*}
where $\mathbb{K}$ is the kernel matrix associated with the points $(z^i_t)_i$. This last equality gives the result using~\cref{ass:universality} and the previous estimates on $p^i$.
\end{proof}

Note that the degeneracy of the bounding functions $M,m$ as $R \rightarrow +\infty$ readily appears in~\cref{PL_ineq_rkhs}. Thus one should not expect these bounds to imply global convergence of GD without making any further assumption. Indeed, cases where GD fails to converge towards a global optimizer of the loss are observed in~\cite{bartlett2018gradient}, Section~6, with a setup corresponding to the model of~\cref{def:rkhs_resnet} with $V$ as in~\cref{residual_set} and $\varphi = \Id_{\r^q}$. 
Also, note that the data separation $\delta$ plays an important role in~\cref{prop:PL_rkhs} as it intervenes in the conditioning of the kernel matrix. In what follows, we always assume the data points to have a data separation lower-bounded by $\delta > 0$.

\subsection{Convergence of \modelname}

Thanks to the convergence analysis for overparameterized models detailed in~\cref{sec:overparameterized}, our main result follows as a consequence of the previous property. \cref{thm:main} is proven in~\cref{subsec:proof_main}.

\begin{theorem} \label{thm:main}
Let $V$ satisfy~\cref{ass:admissibility} with constant $\kappa$ and~\cref{ass:universality} with $\lambda, \Lambda$. Let $v^0$ be some initialization of the control parameter with $\v v^0 \v_{L^2} = R_0$ and assume there exists a positive radius $R \geq 0$ s.t.:
\begin{align} \label{rkhs_init_cond}
    \frac{\sqrt{8} \sigma_{\max}(B^\top) \sqrt{N \Lambda L(v^0)}  e^{3 \kappa (R + R_0)}}{\sigma_{\min}(B^\top)^2 \lambda (\sigma_{\min}(A)^{-1} \delta^{-1} e^{\kappa (R + R_0)}) } \leq R \,.
\end{align}

%\todo{modified the statement for readibility}
Then, for a sufficiently small step-size $\eta > 0$, GD with step-size $\eta$ converges towards a minimizer of the training loss at a linear rate and inside a ball of radius $R$. More precisely, for every $k \geq 0$:
\begin{align} \label{rkhs_convergence}
    L(v^k) \leq (1 - \eta \mu)^{k} L(v^0) , 
    \quad \text{and} \quad \v v^k - v^0 \v_{L^2} \leq R ,
\end{align}
where $\mu \!\eqdef\! \frac{1}{N} \sigma_{\min}^2(B^\top)  \lambda \left( \sigma_{\min}^{-1}(A) \delta^{-1} e^{\kappa (R + R_0)} \right) e^{-2 \kappa (R+R_0)}$.
\end{theorem}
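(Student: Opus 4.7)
The plan is to obtain \Cref{thm:main} as a direct consequence of the abstract local convergence result \Cref{thm:convergence_gen} applied to the empirical risk $L$ on the Hilbert space $\mathcal{H} = \Lspace$, using the PL inequalities of \Cref{prop:PL_rkhs} with the specific functions $m$ and $M$ given in~\cref{PL_ineq_rkhs}. Strictly speaking \Cref{thm:convergence_gen} is stated on $\r^m$, but its proof only uses inner product structure and smoothness on a ball, so it carries over verbatim to a general Hilbert space; this is the only place where the infinite-dimensional nature of $\Lspace$ enters.

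The first step is to rewrite the abstract condition~\cref{init_cond_gen} in our setting. Any $v$ in the closed ball $B(v^0,R)$ satisfies $\v v \v_{L^2} \leq R_0 + R$, so by \Cref{prop:PL_rkhs} the constants $m,M$ can be evaluated at $R_0+R$ on that ball, giving
\begin{align*}
    \frac{\sqrt{M(R_0+R)}}{m(R_0+R)}
    = \frac{\sqrt{N}\,\sigma_{\max}(B^\top)\sqrt{\Lambda}\,e^{3\kappa(R+R_0)}}{\sigma_{\min}(B^\top)^2\,\lambda\!\left(\sigma_{\min}(A)^{-1}\delta^{-1}e^{\kappa(R+R_0)}\right)} .
\end{align*}
Multiplying by $2\sqrt{2}\sqrt{L(v^0)}$ and demanding that the result be bounded by $R$ gives exactly~\cref{rkhs_init_cond}, so our hypothesis is the translation of~\cref{init_cond_gen} to the RKHS-FlowResNet setting. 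The rate $\mu = m(R_0+R)$ in~\cref{rkhs_convergence} then matches the abstract rate $(1-m(\v v^0\v+R)\eta)$ of~\cref{convergence_rate_gen}, and the boundedness~\cref{regularization_gen} becomes the stated $\v v^k - v^0\v_{L^2}\leq R$.

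The second step, which is the main technical obstacle, is to establish $\beta$-smoothness of $L$ on the ball $B(v^0,R)$ in the sense of \Cref{def:smoothness}, so that the GD step-size $\eta\leq\beta^{-1}$ exists. Here the \emph{strong} admissibility \Cref{ass:admissibility}, which includes a bound on $D^2 v$, becomes essential: differentiating the forward equation~\cref{rkhs_forward} twice with respect to $v$ and using Grönwall-type estimates, one shows that both the state flow $v\mapsto z$ and the adjoint flow $v\mapsto p$ are $C^1$ with derivatives controlled by the $W^{2,\infty}$ norm of the residuals. Combined with \Cref{prop:grad}, this yields a local Lipschitz bound on $\nabla L$ on $B(v^0,R)$, of the form $\beta = \beta(\kappa, R_0+R, \Lambda, \v B \v, \v A \v, \max_i \v x^i\v, \max_i \v y^i\v)$. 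I expect this to be the lengthiest part of the argument, since it requires carefully tracking how $\kappa$ enters the second variation of the flow, but it is standard once the embedding $V\hookrightarrow W^{2,\infty}$ is in hand.

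The final step is to invoke \Cref{thm:convergence_gen} (extended to Hilbert spaces) with the constants $m(R_0+R)$, $M(R_0+R)$, $\beta$, and radius $R$ just identified. Its conclusion directly gives the linear decay~\cref{rkhs_convergence} and the boundedness $\v v^k - v^0 \v_{L^2} \leq R$. Convergence of $(v^k)$ itself to a minimizer of $L$ follows from the standard argument that the PL inequality together with the descent lemma implies summability of $\v v^{k+1}-v^k\v_{L^2}$, so $(v^k)$ is Cauchy in $\Lspace$ and its limit $v^\infty$ lies in $B(v^0,R)$ with $L(v^\infty)=0$, i.e.\ is a global minimizer.
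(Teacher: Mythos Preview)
Your proposal is correct and follows essentially the same route as the paper: reduce to \Cref{thm:convergence_gen} via the PL bounds of \Cref{prop:PL_rkhs}, identify \cref{rkhs_init_cond} as the instantiation of \cref{init_cond_gen}, and spend the remaining effort on local $\beta$-smoothness of $L$ on $B(v^0,R)$ using the $W^{2,\infty}$ embedding. The only cosmetic difference is that the paper proves the Lipschitz bound on $\nabla L$ by direct finite-difference estimates on $z-\bar z$ and $p-\bar p$ (Gr\"onwall on each, using $\v D^2 v\v_{2,\infty}\leq\kappa\v v\v_V$ to control $Dv(z)-Dv(\bar z)$ in the adjoint equation), rather than by computing the second variation of the flow; both routes give the same constant and require the same strong admissibility hypothesis.
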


\begin{comment}

\begin{proof}[Sketch of the proof]
Following~\cref{thm:main}, it only remains to show that the \modelname model is locally smooth.
%
Consider two control parameters $v, \Bar{v} \in \Lspace$ and associated solutions $z^i,\Bar{z}^i$ and $p^i, \Bar{p}^i$ of~\cref{rkhs_forward} and ~\cref{rkhs_backward}.  Using~\cref{ass:admissibility} one can derive estimates on $z^i - \Bar{z}^i$ and $p^i - \Bar{p}^i$. The result follows by
%controlling the quantity $\v \nabla L(v) - \nabla L(\Bar{v}) \v_{L^2}$ and by
establishing a bound of the form: $\v \nabla L(v) - \nabla L(\Bar{v}) \v_{L^2} \leq C \v v-\Bar{v} \v_{L^2}$.
\end{proof}

\end{comment}

As~\cref{thm:convergence_gen}, \cref{thm:main} is a local convergence result in which the condition in~\cref{rkhs_init_cond} expresses a threshold between two kinds of behaviours:
\textbf{(i)} if $L(v^0)$ is sufficiently small, the training dynamic converges towards a global minimizer. The limiting behaviour is when the l.h.s. of \cref{rkhs_init_cond} tends to $0$. Because of a regularizing effect of GD (i.e. that $\v v^k - v^0 \v_{L^2} \leq R$), the parameter stays in a ball of arbitrary small radius $R$ all along the training dynamic. In this limit, we recover a ``linear'' or ``kernel'' regime where the model is well approximated by its linearization at $v^0$~\cite{chizat2018global,liu2020linearity,jacot_neural_2021}. 
\textbf{(ii)} If $L(v^0)$ is too large, the result says nothing about the convergence of the GD. However, it is still observed in practice that the training dynamic often converges towards a global minimizer of the loss~\cite{zhang2021understanding}. Explaining this phenomenon in a general setting remains a challenging open question.

\section{Enforcing convergence with high dimensional embedding and finite width} \label{sec:convergence}

As~\cref{thm:main} is a local convergence result, it does not allow to conclude a general convergence behaviour of GD in the training of \modelname. In the following, we show how one can enforce the hypothesis of~\cref{thm:main} to be verified and prove two global convergence results. The first one relies on suitably choosing matrices $A$ and $B$ in order to satisfy~\cref{rkhs_init_cond} and applies in the case of infinite width, i.e. with residual layers in a universal RKHS. The second result recovers global convergence in a finite width regime, relying on a high number $\qint$ of Random Fourier Features.
%and a high embedding dimension $q$ to satisfy~\cref{rkhs_init_cond}.

For the sake of readability we only consider here the case where $V$ belongs to a restricted class of RKHSs and refer to~\cref{sec:convergence_proof} for more general results and complete proofs. For some positive parameter $\nu > 0$ we consider the Matérn kernel $k$ defined in~\cite{williams2006gaussian}:
\begin{align} \label{matern_kernel}
    \forall r \in \r_+, \ k(r) = \frac{2^{1 - \nu}}{\Gamma(\nu)} \left(\frac{\sqrt{2\nu}}{2 \pi} r \right)^{\nu} \mathcal{K}_{\nu} \Big( \frac{\sqrt{2\nu}}{2 \pi} r \Big) ,
\end{align}
where $\Gamma$ is the Gamma function and $\mathcal{K}_{\nu}$ is the modified Bessel function of the second kind. Equivalently, $k$ can be defined by its frequency distribution over $\r^q$ as:
\begin{align} \label{matern_frequency}
    \forall x \in \r^q, \ k(\v x \v) = \int_{\r^q} e^{\imath \langle x, \omega \rangle } \mu_q(\omega) \text{d}\omega \quad
    \text{with} \quad \mu_q(\omega) = C_{q,\nu} (1 + \frac{\v \omega \v^2}{2\nu} )^{-(\frac{q}{2}+\nu)}
\end{align}
and $C_{q,\nu}$ a normalizing constant. For every $q \geq 1$, such a function is known to define a structure of vector-valued RKHS $V_q$ over $\r^q$ corresponding to the Sobolev space $H^{\nu + q/2}(\r^q,\r^q)$~\cite{scholkopf2002learning,williams2006gaussian}.
%
%\todo{Gab: je me disais de peut etre ajouter une phrase ``Note that it important that this RKHS depends on the ambient dimension $q$, in particular that the regularity is larger than $q/2$ to ensure that it stays a valid reproducing space.''}
%
The associated kernel is given for every $z,z'\in \r^q$ by:
$
    K_q(z,z') = k(\v z-z' \v) \Id_q .
$
Note that it is important for this RKHS to depend on the ambient dimension $q$. In particular the Sobolev space $H^s(\r^q,\r^q)$ is a RKHS if and only if it has regularity $s > q/2$.
Assuming $\nu > 2$, $\mu_q$ further admits up to $4$ finite order moment implying that $k$ is four times differentiable at $0$~\cite{review_multivariate}. Then $V_q$ satisfies~\cref{ass:admissibility} with some constant $\kappa$ depending only on $\nu$ and given by~\cref{prop:kappa}:
\begin{align} \label{kappa}
    \kappa & = \sqrt{k(0)} + \sqrt{-k''(0)} + \sqrt{k^{(4)}(0)}  = 1 + \sqrt{\frac{\nu}{\nu-1}} + \sqrt{\frac{3\nu^2}{(\nu-1)(\nu-2)}}.
\end{align}
Also, $V_q$ satisfies~\cref{ass:universality} with $\Lambda \leq N$ and $\lambda$ depending a priori on $\nu$, $q$ and $N$.

Note that with this choice of scaling for $k$ and $\mu_q$, one recovers the Gaussian kernel $k : r \mapsto e^{-r^2/2}$ in the limit $\nu \rightarrow +\infty$~\cite{williams2006gaussian}. Thereafter we consider, $\nu \in (2, +\infty]$, the case $\nu = +\infty$ referring to the Gaussian kernel. We also assume that the data distribution is compactly supported.
%arguing that the proofs can easily be adapted to milder assumptions. 
In particular there exists some $r_0 \geq 0$ so that every input data $x$ verifies $\v x \v \leq r_0$.

\subsection{Global convergence with high-dimensional lifting}

We first show how~\cref{rkhs_init_cond} can be satisfied by considering appropriate embedding matrices $A$ and $B$. Doing so, the square distance between the data points, i.e. the model's loss, is preserved whereas the conditioning of the kernel matrix can be controlled.

%\todo{clarifier / dire avec des mots}
\begin{proposition} \label{prop:embedding} Let $\nu \in (2, +\infty]$, let $(x_i, y_i)_{1 \leq i \leq N} \in (\r^d \times \r^{d'})^N$  be a dataset with data separation $\delta > 0$ and let $R > 0$. There exist $q \geq 1$ and matrices $A \in \r^{q \times d}$, $B \in \r^{d' \times q}$ s.t. GD initialized at $v^0=0$ converges towards a zero-training-loss optimum in the training of \modelname. \\
In particular, \cref{rkhs_init_cond} holds with radius $R$ and $\kappa, \lambda, \Lambda$ associated with the RKHS~$V_q$.
\end{proposition}

As shown in the proof in~\cref{subsec:proof_q}, \cref{prop:embedding} still holds for small but non-zero initialization. We present here two ways of obtaining matrices $A$ and $B$ satisfying~\cref{rkhs_init_cond}:

\myparagraph{Scaling}
Consider
$A = \alpha (\Id_d, 0)^\top \in \r^{(d+d') \times d}$ and $B = \alpha (0, \Id_{d'}) \in \r^{d' \times (d+d')}$, for $\alpha > 0$. \\
We show in~\cref{subsubsec:scaling} that, in this setting, the l.h.s. of~\cref{rkhs_init_cond} scales as $\mathcal{O}(1/\alpha)$ and thus~\cref{thm:main} holds for large enough $\alpha$.
Moreover, observe that $q = d+d'$ is independent of $N$ and $\delta$ and such a regime can easily be implemented in practice. However, it has been shown that, although interpolation of the training data can be achieved as a consequence of a suitable rescaling of the parameters, this ``lazy regime'' can also lead to bad generalization properties~\cite{chizat2019lazy}.

\myparagraph{Lifting}
Consider for $q \geq 1$ the matrices:
$A_q \eqdef q^{-1/4}(\Id_d, ..., \Id_d, 0 )^\top \in \r^{q \times d}$ and $B_q \eqdef q^{1/4} (\Id_{d'}, 0 ... 0 ) \in \r^{d' \times q}$,
with $\lfloor q/d \rfloor$ copies of $\Id_d$ in $A_q$. This choice is motivated by the intention for these matrices to produce a high-dimensional lifting, which has been shown to improve on the expressivity of ResNets~\cite{dupont2019augmented}. We then show in~\cref{subsubsec:lifting} that~\cref{rkhs_init_cond} can be satisfied for $q = \Omega(N^4 + \delta^{-4}\log(N)^4)$. We do not expect our condition on $q$ to be optimal as we observe in experiments (see~\cref{sec:expe}) that a regime of linear convergence can be obtained for $q \ll N^4 + \delta^{-4} \log(N)^4$. However, we observe that increasing $q$ does improve on the convergence and generalization properties of our model (\cref{fig:8channels}).

\subsection{Global convergence with finite width}

In the preceding we showed that, in the case of an RKHS defined by a Matérn kernel, convergence of GD can be ensured for well-chosen matrices $A$ and $B$. However, for practical implementations, the form of the residual in~\cref{residual_set} forces us to consider RKHSs defined by feature maps. A way to overcome this difficulty and to benefit from the properties of a wide range of kernel functions is to consider an approximation by \emph{Random Fourier Features (RFF)}~\cite{rahimi2007random,rahimi2008weighted}.
More precisely, given $q \geq 1$, recall the definition of the Matérn kernel $k$ in terms of its frequency distribution $\mu_q$ over $\r^q$ in~\cref{matern_frequency} and for any sampling $\omega^1, ..., \omega^{\qint} \overset{iid}{\sim} \hspace{-0.4em} \mu_q$ of size $\qint$, consider the feature map:
\begin{align} \label{RFF}
    \varphi : z \in \r^q \mapsto \frac{1}{\sqrt{\qint}} (e^{\imath \langle z, \omega^j \rangle})_{1 \leq j \leq \qint} \in \C^{\qint} .
\end{align}

%\begin{remark}[Sampling] \label{rk:sampling}
%Note that $\mu_q$ identifies as the density of a $q$-variate $t$-distribution with shape parameter $2\nu$~\cite{review_multivariate}. Sampling over $\mu_q$ can be achieved using that for $Y \sim \mathcal{N}(0,\Id_q)$ and for $u$ distributed according to $\chi^2_{2 \nu}$, the chi-squared distribution with $2 \nu$ degrees of freedom, $Y/\sqrt{u/2\nu}$ is distributed according to $\mu_q$.  
%\end{remark}

In other words, considering the complex activation $\sigma : t \mapsto e^{\imath t}$ applied component-wise and $U \eqdef (\omega^1 | \ldots | \omega^{\qint}) \in \r^{q \times \qint}$ the feature matrix, we have
$\varphi(z) = \qint^{-1/2} \sigma ( U^\top z)$. Recall that such a feature map defines a structure of RKHS on
$
    \Hat{V}_q \eqdef \lbrace W \varphi(.) \ | \ W \in \r^{q \times \qint} \rbrace
$.
Such a $\Hat{V}_q$ can be viewed as a finite-dimensional approximation of the universal RKHS $V_q$ as it is associated with the kernel function $\Hat{K}_q (z,z') \eqdef \Hat{k}(z,z') \Id_q$, with:
\begin{align*}
    \Hat{k}(z,z') \eqdef \langle \varphi(z), \varphi(z') \rangle
    = \frac{1}{\qint} \sum_{j=1}^{\qint} e^{\imath \langle z-z', \omega^j \rangle}
    \xrightarrow[]{\qint \rightarrow + \infty} k( \v z-z' \v) \ \text{a.s.}
\end{align*}
Given any $q \geq 1$, we show that, with high probability over the choice of features, $\Hat{V}_q$ recovers the properties of admissibility and universality of $V_q$ as soon as $\qint$ is sufficiently high w.r.t. $q$ and $N$. The following is a particular case of~\cref{prop:qint_gen} in~\cref{subsec:proof_qint}.

\begin{proposition} \label{prop:qint}
Consider any $q, N \geq 2$ and any $\epsilon, \tau, R > 0$. Assume $\nu > 4$. \\
\textbf{(i)} For $\qint \geq \Omega(\tau q^8)$, with probability greater than $1-\tau^{-1}$, $\Hat{V}_q$ satisfies~\cref{ass:admissibility} with $\Hat{\kappa} \leq \kappa + 1$. \\
\textbf{(ii)} For $\qint \geq \Omega(\epsilon^{-2} N^2 (q \log(\v A \v_2 r_0 + R) + \tau)) $, with probability greater than $1 - e^{-\tau}$, for any $v \in L^2(\Hat{V}_q)$ s.t. $\v v \v_{L^2} \leq R$ and any time $t \in \tset$:
$
    \lambda_{\min}(\Hat{\mathbb{K}} ( (z^i_t)_i ) ) \geq \lambda_{\min}(\mathbb{K} ( (z^i_t)_i ) ) - \epsilon
$,
where $(z^i)_i$ are the solutions to~\cref{rkhs_forward} and $\Hat{\mathbb{K}}$, $\mathbb{K}$ are the kernel matrices of $\Hat{k}$ and $k$ respectively.
 %   \end{enumerate}
\end{proposition}

\begin{proof}[Sketch of proof for (i)]

%\textbf{Proof of (i).}
First note that for $\nu > 4$, $\mu_q$ admits up to $8^{th}$-order finite moments and these can be bounded uniformly in $q$~\cite{review_multivariate}.
Let $\varphi$ be the feature map of~\cref{RFF}. Then for every $z \in \r^q$, $\v \varphi (z) \v \leq 1$ so that for every $v \in \Hat{V}_q$, $\v v \v_{\infty} \leq \v W \v \v \varphi \v_{\infty} \leq \v v \v_V$. 
For the differential $Dv$ we have for every $z \in \r^q$:
\begin{align*}
    D \varphi(z) = \frac{1}{\sqrt{\qint}} \left( \omega^j_i e^{-\imath \langle z, \omega^j \rangle} \right)_{i,j} \in \r^{\qint \times q} \,.
\end{align*}
Then, by the Bienayme-Chebyshev inequality,
$D \varphi(z)^* D \varphi(z) = \frac{1}{\qint} \sum_{j= 1}^{\qint} \omega^j (\omega^j)^\top$ converges in probability to $- k''(0) \Id_q$ as $\qint \rightarrow+\infty$. Thus, for $\alpha > 0$ and $\qint$ sufficiently high w.r.t. $q$, $\alpha$ and $\tau$, $\v Dv \v_{2,\infty} = \v W D\varphi \v_{2, \infty} \leq \sqrt{- k''(0) +\alpha} \v v \v_{\Hat{V}_q}$, with probability greater than $1-\tau^{-1}$. 
The same idea applies to bound $\v D^2 v \v_{2, \infty}$ and the result follows using that $\kappa$ is given by~\cref{kappa}. 

\end{proof}

\begin{comment}

\textbf{Proof of (ii). }
For $t \in \tset$, we consider $(z_t^i)_i$ the solutions of of~\cref{rkhs_forward} for some control parameter $v \in L^2(\tset, \Hat{V}_q)$ and we introduce the kernel matrices:
\begin{align*}
    \Hat{\mathbb{K}}_t  = ( \Hat{K}_q (z_t^i, z_t^j) )_{1 \leq i,j \leq N},  \:
    \mathbb{K}_t = ( K_q(z_t^i, z_t^j) )_{1 \leq i,j \leq N}.
\end{align*}
Using the first point, we know that if $\v v \v_{L^2} \leq R$, then
$\v z^i_t \v \leq \v A x^i \v + (\kappa+1)R$.
Then, using Theorem 1 in~\cite{sriperumbudur2015optimal}, we have for every indices $i,j$ and every $t \in \tset$:
\begin{align*}
    \mathbb{P} \Big( | \Hat{k}(z^i_t, z^j_t) - k(\v z^i_t - z^j_t \v) | \geq \frac{h(q,R) + \sqrt{2\tau}}{\sqrt{\qint}} \Big) \leq e^{-\tau} ,
\end{align*}
with $h(q,R) = \mathcal{O}(\sqrt{q \log(\v A \v_2 r_0 + R)})$. Therefore, choosing $\qint \geq \Omega \left( \epsilon^{-2} N^2 ( q \log(\v A \v_2 r_0 + R) + \tau ) \right)$, we have with probability greater than $1 - e^{- \tau}$, $\lambda_{\min}(\Hat{\mathbb{K}}_t) \geq \lambda_{\min} \left( \mathbb{K}_t \right) - \epsilon$, for any $t \in \tset$ .

\end{comment}

%Finally, combining the results of~\cref{prop:embedding} and~\cref{prop:qint} we obtain a global convergence theorem for ResNets of finite width.

Finally, combining~\cref{prop:embedding} and~\cref{prop:qint}, we obtain a global convergence result. \Cref{thm:global_convergence} states convergence, with high probability over a choice of features, of GD towards a zero-training-loss optimum for infinitely deep ResNets of finite width.

\begin{theorem}[Global convergence] \label{thm:global_convergence}

Assume $\nu > 4$ and let $(x^i,y^i) \in ( \r^{d} \times \r^{d'} )^N$  be a compactly supported dataset with input data separation $\delta > 0$. There exist matrices $A \in \r^{q \times d}$ and $B \in \r^{d' \times q}$ s.t. for any $\tau > 0$, with probability at least $1 - \tau^{-1}$ w.r.t. the choices of features, GD initialized at $v^0=0$ converges towards a zero training loss optimum in the training of the \modelname model of~\cref{def:rkhs_resnet} with the feature map $\varphi$ of~\cref{RFF} as soon as $\qint \geq \Omega ( \tau (q^8 + q N^2 \log(\v A \v_2))$.
%, for some absolute constant $C$.
\end{theorem}

\begin{proof}

Consider $R = 1 $. By~\cref{prop:embedding},  we can have $A \in \r^{q\times d}$,  $B \in \r^{d' \times q}$ so that in~\cref{rkhs_init_cond}:
\begin{align*}
    \frac{ 8 \sqrt{2} \sigma_{\max}(B^\top) \sqrt{N \Lambda L(0)}  e^{3 (\kappa+1) }}{\sigma_{\min}(B^\top)^2 \lambda ( \sigma_{\min}(A)^{-1} \delta^{-1} e^{(\kappa+1)} ) } \leq 1  ,
\end{align*}
for $\kappa$, $\lambda$ and $\Lambda$ associated with $k$. Also, by the proof of~\cref{prop:embedding} we can have:
$
    \lambda ( \sigma_{\min}(A)^{-1} \delta^{-1} e^{(\kappa+1)} ) \geq 1/2 .
$
Taking $\epsilon = 1/4$ in~\cref{prop:qint}, the condition in~\cref{rkhs_init_cond} is satisfied by $\Hat{V}_q$ with probability greater than $1 - \tau^{-1}$ as soon as $\qint \geq \Omega(\tau q^8 + \tau q N^2 \log(1 + \v A \v_2 r_0 ) )$.
\end{proof}

%\Cref{thm:global_convergence} concludes that having a networks width $\qint \geq \Omega( \tau ( N^{32} + \delta^{-32} \log(N)^{32} )  )$
%(i.e. $ r \geq poly(\tau, N, \delta^{-1})$) implies convergence of GD towards a global optimum with probability greater than $1-\tau^{-1}$.
%Note that in order to obtain this result, the choice of the matrices $A_q, B_q$ in~\cref{prop:embedding} is not restrictive and the dependency of $q$ and $\qint$ w.r.t. $N$ and $\delta$ might be improved for other well-chosen embedding matrices. This is discussed at the end of~\cref{subsec:proof_q}. \todo{Reference to discussion on the choice of $A$ and $B$}

\section{Conclusion} \label{sec:conclusion}

We have identified a relevant infinite width limit (\modelname) for a particular model of ResNet. We showed that GD converges linearly when training this model and that a network's width polynomial w.r.t. to the size of the dataset is sufficient to maintain this property. 
A natural extension of our result is to study the convergence of GD when also training the hidden layers of the residuals. A first step towards this general case consists in studying the corresponding mean field model where the residuals are parameterized by density distributions over the neurons~\cite{chizat2018global,mei2018mean,mei2019mean,javanmard_analysis_2020,lu2020mean,fang2021modeling} for each residual blocks. Interestingly, such a parametrization of the residual blocks is still linear in this measure and thus fits into our framework of linear in parameters. However, it would require a finer mathematical analysis to obtain similar results.

{\bf Potential Negative Societal Impacts.} Our work aims at improving the theoretical and practical understanding of deep networks and therefore we do not expect a direct negative impact.

\vspace{-0.5em}
\section*{Acknowledgements}
\vspace{-0.7em}

The work of Gabriel Peyré was supported by the French government under management of Agence Nationale de la Recherche as part of the ”Investissements d’avenir” program, reference ANR19- P3IA-0001 (PRAIRIE 3IA Institute) and by the European Research Council (ERC project NORIA). This work was performed using HPC resources from GENCI-IDRIS (Grant 2022-[AD011013400]).

\clearpage

\bibliographystyle{siam} 
\bibliography{references}

\begin{thebibliography}{10}

\bibitem{akiyama2021learnability}
{\sc S.~Akiyama and T.~Suzuki}, {\em On learnability via gradient method for
  two-layer relu neural networks in teacher-student setting}, arXiv e-prints,
  (2021), pp.~arXiv--2106.

\bibitem{allen2019convergence}
{\sc Z.~Allen-Zhu, Y.~Li, and Z.~Song}, {\em A convergence theory for deep
  learning via over-parameterization}, in International Conference on Machine
  Learning, PMLR, 2019, pp.~242--252.

\bibitem{ambrosio2008gradient}
{\sc L.~Ambrosio, N.~Gigli, and G.~Savaré}, {\em Gradient flows: in metric
  spaces and in the space of probability measures}, Lectures in mathematics ETH
  Zürich,  (2008).

\bibitem{aronszajn1950theory}
{\sc N.~Aronszajn}, {\em Theory of reproducing kernels}, Transactions of the
  American mathematical society, 68 (1950), pp.~337--404.

\bibitem{bach2017breaking}
{\sc F.~Bach}, {\em Breaking the curse of dimensionality with convex neural
  networks}, The Journal of Machine Learning Research, 18 (2017), pp.~629--681.

\bibitem{barron_universal_1993}
{\sc A.~Barron}, {\em Universal approximation bounds for superpositions of a
  sigmoidal function}, IEEE Transactions on Information Theory, 39 (1993),
  pp.~930--945.

\bibitem{bartlett2018gradient}
{\sc P.~Bartlett, D.~Helmbold, and P.~Long}, {\em Gradient descent with
  identity initialization efficiently learns positive definite linear
  transformations by deep residual networks}, in International Conference on
  Machine Learning, PMLR, 2018, pp.~521--530.

\bibitem{beg2005computing}
{\sc M.~F. Beg, M.~I. Miller, A.~Trouv{\'e}, and L.~Younes}, {\em Computing
  large deformation metric mappings via geodesic flows of diffeomorphisms},
  International journal of computer vision, 61 (2005), pp.~139--157.

\bibitem{belkin2019reconciling}
{\sc M.~Belkin, D.~Hsu, S.~Ma, and S.~Mandal}, {\em Reconciling modern
  machine-learning practice and the classical bias--variance trade-off},
  Proceedings of the National Academy of Sciences, 116 (2019),
  pp.~15849--15854.

\bibitem{bolte_characterizations_2009}
{\sc J.~Bolte, A.~Daniilidis, O.~Ley, and L.~Mazet}, {\em Characterizations of
  Łojasiewicz inequalities: {Subgradient} flows, talweg, convexity},
  Transactions of the American Mathematical Society, 362 (2009),
  pp.~3319--3363.

\bibitem{chen2018neural}
{\sc R.~T.~Q. Chen, Y.~Rubanova, J.~Bettencourt, and D.~Duvenaud}, {\em Neural
  ordinary differential equations}, Advances in Neural Information Processing
  Systems,  (2018).

\bibitem{chen2020much}
{\sc Z.~Chen, Y.~Cao, D.~Zou, and Q.~Gu}, {\em How much over-parameterization
  is sufficient to learn deep relu networks?}, in International Conference on
  Learning Representations, 2020.

\bibitem{chizat2021sparse}
{\sc L.~Chizat}, {\em Sparse optimization on measures with over-parameterized
  gradient descent}, Mathematical Programming,  (2021), pp.~1--46.

\bibitem{chizat2018global}
{\sc L.~Chizat and F.~Bach}, {\em On the global convergence of gradient descent
  for over-parameterized models using optimal transport}, Advances in Neural
  Information Processing Systems, 31 (2018), pp.~3036--3046.

\bibitem{chizat2019lazy}
{\sc L.~Chizat, E.~Oyallon, and F.~Bach}, {\em On lazy training in
  differentiable programming}, in NeurIPS 2019-33rd Conference on Neural
  Information Processing Systems, 2019, pp.~2937--2947.

\bibitem{du2019gradient}
{\sc S.~Du, J.~Lee, H.~Li, L.~Wang, and X.~Zhai}, {\em Gradient descent finds
  global minima of deep neural networks}, in International Conference on
  Machine Learning, PMLR, 2019, pp.~1675--1685.

\bibitem{du2018gradient}
{\sc S.~S. Du, X.~Zhai, B.~Poczos, and A.~Singh}, {\em Gradient descent
  provably optimizes over-parameterized neural networks}, in International
  Conference on Learning Representations, 2018.

\bibitem{dupont2019augmented}
{\sc E.~Dupont, A.~Doucet, and Y.~W. Teh}, {\em Augmented neural odes}, in
  Proceedings of the 33rd International Conference on Neural Information
  Processing Systems, 2019, pp.~3140--3150.

\bibitem{e_mean-field_2019}
{\sc W.~E, J.~Han, and Q.~Li}, {\em A mean-field optimal control formulation of
  deep learning}, Research in the Mathematical Sciences, 6 (2019), p.~10.

\bibitem{e_barron_2021}
{\sc W.~E, C.~Ma, and L.~Wu}, {\em The {Barron} {Space} and the
  {Flow}-{Induced} {Function} {Spaces} for {Neural} {Network} {Models}},
  Constructive Approximation,  (2021).

\bibitem{fang2021modeling}
{\sc C.~Fang, J.~Lee, P.~Yang, and T.~Zhang}, {\em Modeling from features: a
  mean-field framework for over-parameterized deep neural networks}, in
  Conference on Learning Theory, PMLR, 2021, pp.~1887--1936.

\bibitem{frei2021proxy}
{\sc S.~Frei and Q.~Gu}, {\em Proxy convexity: A unified framework for the
  analysis of neural networks trained by gradient descent}, in Thirty-Fifth
  Conference on Neural Information Processing Systems, 2021.

\bibitem{hale_ordinary_2009}
{\sc J.~K. Hale}, {\em Ordinary differential equations}, Dover Publications,
  Mineola, N.Y, dover ed~ed., 2009.
\newblock OCLC: ocn294885198.

\bibitem{hardt_identity_2018}
{\sc M.~Hardt and T.~Ma}, {\em Identity {Matters} in {Deep} {Learning}},
  arXiv:1611.04231 [cs, stat],  (2018).
\newblock arXiv: 1611.04231.

\bibitem{he2016deep}
{\sc K.~He, X.~Zhang, S.~Ren, and J.~Sun}, {\em Deep residual learning for
  image recognition}, in Proceedings of the IEEE conference on computer vision
  and pattern recognition, 2016, pp.~770--778.

\bibitem{jacot_neural_2021}
{\sc A.~Jacot, F.~Gabriel, and C.~Hongler}, {\em Neural tangent kernel:
  convergence and generalization in neural networks (invited paper)}, in
  Proceedings of the 53rd {Annual} {ACM} {SIGACT} {Symposium} on {Theory} of
  {Computing}, Virtual Italy, June 2021, ACM, pp.~6--6.

\bibitem{javanmard_analysis_2020}
{\sc A.~Javanmard, M.~Mondelli, and A.~Montanari}, {\em Analysis of a two-layer
  neural network via displacement convexity}, The Annals of Statistics, 48
  (2020).

\bibitem{review_multivariate}
{\sc B.~M.~G. Kibria and A.~Joarder}, {\em A short review of multivariate
  t-distribution}, Journal of Statistical Research ISSN, 40 (2006),
  pp.~256--422.

\bibitem{kobyzev_normalizing_2020}
{\sc I.~Kobyzev, S.~Prince, and M.~Brubaker}, {\em Normalizing {Flows}: {An}
  {Introduction} and {Review} of {Current} {Methods}}, IEEE Transactions on
  Pattern Analysis and Machine Intelligence,  (2020), pp.~1--1.

\bibitem{lacoste2019quantifying}
{\sc A.~Lacoste, A.~Luccioni, V.~Schmidt, and T.~Dandres}, {\em Quantifying the
  carbon emissions of machine learning}, arXiv preprint arXiv:1910.09700,
  (2019).

\bibitem{laforgia2010some}
{\sc A.~Laforgia and P.~Natalini}, {\em Some inequalities for modified bessel
  functions}, Journal of Inequalities and Applications, 2010 (2010), pp.~1--10.

\bibitem{lee2019wide}
{\sc J.~Lee, L.~Xiao, S.~Schoenholz, Y.~Bahri, R.~Novak, J.~Sohl-Dickstein, and
  J.~Pennington}, {\em Wide neural networks of any depth evolve as linear
  models under gradient descent}, Advances in neural information processing
  systems, 32 (2019), pp.~8572--8583.

\bibitem{li2018learning}
{\sc Y.~Li and Y.~Liang}, {\em Learning overparameterized neural networks via
  stochastic gradient descent on structured data}, Advances in neural
  information processing systems,  (2018).

\bibitem{li2017convergence}
{\sc Y.~Li and Y.~Yuan}, {\em Convergence analysis of two-layer neural networks
  with relu activation}, Advances in Neural Information Processing Systems, 30
  (2017), pp.~597--607.

\bibitem{liu2020linearity}
{\sc C.~Liu, L.~Zhu, and M.~Belkin}, {\em On the linearity of large non-linear
  models: when and why the tangent kernel is constant}, Advances in Neural
  Information Processing Systems, 33 (2020).

\bibitem{liu_loss_2021}
\leavevmode\vrule height 2pt depth -1.6pt width 23pt, {\em Loss landscapes and
  optimization in over-parameterized non-linear systems and neural networks},
  arXiv:2003.00307 [cs, math, stat],  (2021).
\newblock arXiv: 2003.00307.

\bibitem{lojasiewicz1982trajectoires}
{\sc S.~Lojasiewicz}, {\em Sur les trajectoires du gradient d’une fonction
  analytique}, Seminari di geometria, 1983 (1982), pp.~115--117.

\bibitem{lu2020mean}
{\sc Y.~Lu, C.~Ma, Y.~Lu, J.~Lu, and L.~Ying}, {\em A mean field analysis of
  deep resnet and beyond: Towards provably optimization via
  overparameterization from depth}, in International Conference on Machine
  Learning, PMLR, 2020, pp.~6426--6436.

\bibitem{lu2018beyond}
{\sc Y.~Lu, A.~Zhong, Q.~Li, and B.~Dong}, {\em Beyond finite layer neural
  networks: Bridging deep architectures and numerical differential equations},
  in International Conference on Machine Learning, PMLR, 2018, pp.~3276--3285.

\bibitem{mei2019mean}
{\sc S.~Mei, T.~Misiakiewicz, and A.~Montanari}, {\em Mean-field theory of
  two-layers neural networks: dimension-free bounds and kernel limit}, in
  Conference on Learning Theory, PMLR, 2019, pp.~2388--2464.

\bibitem{mei2018mean}
{\sc S.~Mei, A.~Montanari, and P.-M. Nguyen}, {\em A mean field view of the
  landscape of two-layer neural networks}, Proceedings of the National Academy
  of Sciences, 115 (2018), pp.~E7665--E7671.

\bibitem{neyshabur2017implicit}
{\sc B.~Neyshabur}, {\em Implicit regularization in deep learning}, arXiv
  preprint arXiv:1709.01953,  (2017).

\bibitem{nguyen_proof_2021}
{\sc Q.~Nguyen}, {\em On the {Proof} of {Global} {Convergence} of {Gradient}
  {Descent} for {Deep} {ReLU} {Networks} with {Linear} {Widths}},
  arXiv:2101.09612 [cs, stat],  (2021).
\newblock arXiv: 2101.09612.

\bibitem{Niethammer2011}
{\sc M.~Niethammer, Y.~Huang, and F.-X. Vialard}, {\em Geodesic regression for
  image time-series}, in Medical Image Computing and Computer-Assisted
  Intervention -- MICCAI 2011: 14th International Conference, Toronto, Canada,
  September 18-22, 2011, Proceedings, Part II, G.~Fichtinger, A.~Martel, and
  T.~Peters, eds., Berlin, Heidelberg, 2011, Springer Berlin Heidelberg,
  pp.~655--662.

\bibitem{paszke2017automatic}
{\sc A.~Paszke, S.~Gross, S.~Chintala, G.~Chanan, E.~Yang, Z.~DeVito, Z.~Lin,
  A.~Desmaison, L.~Antiga, and A.~Lerer}, {\em Automatic differentiation in
  pytorch},  (2017).

\bibitem{pham2020global}
{\sc H.~T. Pham and P.-M. Nguyen}, {\em Global convergence of three-layer
  neural networks in the mean field regime}, in International Conference on
  Learning Representations, 2020.

\bibitem{pontryagin1987mathematical}
{\sc L.~S. Pontryagin}, {\em Mathematical theory of optimal processes}, CRC
  press, 1987.

\bibitem{rahimi2007random}
{\sc A.~Rahimi and B.~Recht}, {\em Random features for large-scale kernel
  machines}, in Proceedings of the 20th International Conference on Neural
  Information Processing Systems, 2007, pp.~1177--1184.

\bibitem{rahimi2008weighted}
\leavevmode\vrule height 2pt depth -1.6pt width 23pt, {\em Weighted sums of
  random kitchen sinks: Replacing minimization with randomization in learning},
  Advances in Neural Information Processing Systems, 21 (2008), pp.~1313--1320.

\bibitem{rudin2017fourier}
{\sc W.~Rudin}, {\em Fourier analysis on groups}, Courier Dover Publications,
  2017.

\bibitem{salman2018deep}
{\sc H.~Salman, P.~Yadollahpour, T.~Fletcher, and K.~Batmanghelich}, {\em Deep
  diffeomorphic normalizing flows}, arXiv e-prints,  (2018), pp.~arXiv--1810.

\bibitem{scholkopf2002learning}
{\sc B.~Sch{\"o}lkopf, A.~J. Smola, F.~Bach, et~al.}, {\em Learning with
  kernels: support vector machines, regularization, optimization, and beyond},
  2002.

\bibitem{sriperumbudur2015optimal}
{\sc B.~Sriperumbudur and Z.~Szabo}, {\em Optimal rates for random fourier
  features}, Advances in Neural Information Processing Systems, 28 (2015),
  pp.~1144--1152.

\bibitem{sutherland2015error}
{\sc D.~J. Sutherland and J.~Schneider}, {\em On the error of random fourier
  features}, in Proceedings of the Thirty-First Conference on Uncertainty in
  Artificial Intelligence, 2015, pp.~862--871.

\bibitem{trouve1998diffeomorphisms}
{\sc A.~Trouv{\'e}}, {\em Diffeomorphisms groups and pattern matching in image
  analysis}, International journal of computer vision, 28 (1998), pp.~213--221.

\bibitem{vialard2020shooting}
{\sc F.-X. Vialard, R.~Kwitt, S.~Wei, and M.~Niethammer}, {\em A shooting
  formulation of deep learning}, Advances in Neural Information Processing
  Systems, 33 (2020).

\bibitem{williams2006gaussian}
{\sc C.~K. Williams and C.~E. Rasmussen}, {\em Gaussian processes for machine
  learning}, vol.~2, MIT press Cambridge, MA, 2006.

\bibitem{younes_shapes_2010}
{\sc L.~Younes}, {\em Shapes and {Diffeomorphisms}}, vol.~171 of Applied
  {Mathematical} {Sciences}, Springer Berlin Heidelberg, Berlin, Heidelberg,
  2010.

\bibitem{zagoruyko2016wide}
{\sc S.~Zagoruyko and N.~Komodakis}, {\em Wide residual networks}, in British
  Machine Vision Conference 2016, British Machine Vision Association, 2016.

\bibitem{zhang2021understanding}
{\sc C.~Zhang, S.~Bengio, M.~Hardt, B.~Recht, and O.~Vinyals}, {\em
  Understanding deep learning (still) requires rethinking generalization},
  Communications of the ACM, 64 (2021), pp.~107--115.

\bibitem{zhang2018fixup}
{\sc H.~Zhang, Y.~N. Dauphin, and T.~Ma}, {\em Fixup initialization: Residual
  learning without normalization}, in International Conference on Learning
  Representations, 2018.

\bibitem{zhou2021local}
{\sc M.~Zhou, R.~Ge, and C.~Jin}, {\em A local convergence theory for mildly
  over-parameterized two-layer neural network}, in COLT, 2021.

\bibitem{zou_gradient_2020}
{\sc D.~Zou, Y.~Cao, D.~Zhou, and Q.~Gu}, {\em Gradient descent optimizes
  over-parameterized deep {ReLU} networks}, Machine Learning, 109 (2020),
  pp.~467--492.

\bibitem{zou2019global}
{\sc D.~Zou, P.~M. Long, and Q.~Gu}, {\em On the global convergence of training
  deep linear resnets}, in International Conference on Learning
  Representations, 2019.

\end{thebibliography}

\appendix

\clearpage

\section{Numerical experiments} \label{sec:expe}

The goal of this section is to quantify how much (in addition to interpolating the training dataset) our model is able to generalize on the test dataset. 
This is also useful to compare the performances of our model with those of standard ResNet architectures (which integrate batch normalization and training of the hidden layers).
We implemented our model in Pytorch~\cite{paszke2017automatic} and trained it on image datasets for classification tasks. Source code is available at \url{https://github.com/rbarboni/FlowResNets}.

Experiments were conducted using a private infrastructure, which has a carbon efficiency of 0.05~kgCO$_2$eq/kWh. A cumulative of (at most) 1000 hours of computation was performed on hardware of type Tesla V100-PCIE-16GB (TDP of 300W). Total emissions are estimated to be 15 kgCO$_2$eq (or 60km in an average car) of which 0 percents were directly offset. \\
Estimations were conducted using the \href{https://mlco2.github.io/impact#compute}{MachineLearning Impact calculator} presented in \cite{lacoste2019quantifying}.

\myparagraph{Computational setup for classification tasks.}
In the context of classification tasks, we use a cross entropy loss in place of the least square loss of \cref{empirical_risk}. For a problem with $K$ classes, the output dimension of the model is $d' = K$ and targets $y \in \r^K$ are one-hot vector encoding the target classes. 
For a batch of $N$ predictions $(z^i)_{\iset}$ and targets $(y^i)_{\iset}$ in $\r^K$ the \emph{Cross Entropy} loss is defined as:
\begin{align*}
    \mathtt{CrossEntropy}((z^i)_i, (y^i)_i) \eqdef \frac{1}{N} \sum_{i=1}^N \ell(z^i, y^i) ,
\end{align*}
where $\ell$ is the \emph{Binary Entropy} defined for one prediction $z$ and one target $y \in \r^K$ by:
\begin{align*}
    \ell(z,y) \eqdef \frac{\sum_{j=1}^K y^i_j e^{z^i_j} }{\sum_{j=1}^K e^{z^i_j} } .
\end{align*}
Then for a model $F$ depending on the parameters $W$ and a training batch $(x^i, y^i)_{\iset}$ we define the empirical risk:
\begin{align*}
    L(W) \eqdef \mathtt{CrossEntropy}((F(W, x^i)_i, (y^i)_i) ,
\end{align*}
and train the model by \emph{Stochastic Gradient Descent (SGD)} on $W$. Finally, the performance of the model is assessed by the \emph{Top-1 error rate} on a test dataset.

Note that, as explained in~\cref{PL_losses}, the result of \cref{thm:global_convergence} can be extended to this cross entropy loss. Indeed, $\ell$ satisfies a functional inequality similar to the Polyak-Lojasiewicz inequality. Assuming without loss of generality that $y = e_1$ is the indicator of class $1$, one has:
\begin{align*}
    \nabla_{z_1} \ell (z, y) & = e^{- \ell(z,y)} -1,
    %\forall j \geq 2, \ \nabla_{z_j} \ell (z,y) &= e^{- \ell(z,y)} .
\end{align*}
Then by convexity of exponential, when $\ell(z,y) \leq 1$:
\begin{align*}
    \v \nabla_z \ell(z,y) \v^2 & \geq (1 - e^{-\ell(z,y)})^2 \geq (1 - e^{-1})^2 \ell(z,y)^2 .
\end{align*}
Note however that \cref{thm:global_convergence} is only valid for full batch gradient descent. We leave its extension to SGD for future works.

\subsection{Experiments on MNIST} \label{sec:setup}
We implemented the model of~\cref{def:rkhs_resnet} on Pytorch using the $\mathtt{torchdiffeq}$ package~\cite{chen2018neural} and performed experiments on the MNIST dataset.

\myparagraph{Implementation using $\mathtt{torchdiffeq}$.}
The model of~\cref{def:rkhs_resnet} is implemented as a succession of convolutional layers. Given some number of layers $L$ the trained parameters consist of convolution matrices $W_k \in \r^{C \times C_{int} \times 3 \times 3}$ for $k \in \llbracket 0, L \rrbracket$, with $C$ the number of channels of the input image and $C_{int}$ some number of channels for the hidden layers. The control parameter $v$ is defined at discrete time steps $\lbrace k/L \rbrace_{0 \leq k \leq L}$ by:
\begin{align*}
    v_{k/L}(x) = W_k \star \mathtt{ReLU} ( U \star x) ,
\end{align*}
where $U \in \r^{C_{int} \times C \times 3 \times 3}$ is a fixed and untrained convolution matrix. We refer to this setting as a ResNet with RKHS residuals. On the other hand, we refer to the setting where $U$ is replaced at each layer by trained convolution matrices $U_k$ as ResNet with $\emph{Single Hidden Layer (SHL)}$ residuals.

\begin{remark}
By analogy with the definition of RKHSs generated by random features (\cref{RFF}), the ratio between the number of features and the dimension is here:
\begin{align*}
    \frac{r}{q} = \frac{C_{int}}{C} .
\end{align*}
\end{remark}

For any $t \in \tset$, $v_t$ is defined by affine interpolation:
\begin{align*}
    v_t(x) & \eqdef v_{k/L}(x) + (tL - k) \left( v_{(k+1)/L}(x) - v_{k/L}(x) \right) \\
    & = (W_k + (tL - k) (W_{k+1}-W_k)) \star \sigma( U \star x) ,
\end{align*}
with $k = \lfloor tL \rfloor$. The forward method consists in integrating the ODE of~\cref{rkhs_forward} with control parameter $v$ using the $\mathtt{torchdiffeq.odeint}$ method~\cite{chen2018neural}. For some input $z_0$ define:
\begin{align*}
    z_1((W_k), z_0) \eqdef \mathtt{torchdiffeq.odeint}(v, z_0, \tset) ,
\end{align*}
then for an image input $x$ the model's output is given by:
\begin{align*}
    F((W_k), x) = \mathtt{B}(z_1((W_k), \mathtt{A}(x))) ,
\end{align*}
where $\mathtt{A}$ and $\mathtt{B}$ are small convolutional networks, fixed during the training of $F$. These networks play the same role as the matrices $A$ and $B$ in~\cref{def:rkhs_resnet}, that is they are used for the purpose of adjusting the data dimension.

\myparagraph{Hyperparameter tuning.}
Several choices of hyperparameters can affect the performances of the model.
\begin{itemize}
    \item The convolution matrix $U$: as detailed in~\cref{sec:convergence}, the way the weights of $U$ are sampled determines to which RKHS $V$ belongs the control parameter $v$. For the sake of simplicity we choose to sample the coefficients of $U$ as i.i.d. Gaussians.
    \item The initialization of $(W_k)$: the weights of the convolution matrices $W_k$ are initialized to $0$. For an input image $x$ the output is given at initialization by $F(0,x) = \mathtt{B}( \mathtt{A} ( x ))$.
    \item The integration method: $\mathtt{torchdiffeq.odeint}$ allows the user to choose an integration method. We observed an \emph{explicit midpoint} method to
offer a good trade-off between performance and numerical stability w.r.t. other fixed-steps methods such as \emph{explicit Euler} or \emph{RK4}.
    \item The number of layers $L$: we tested our model for $L \in \lbrace 5, 10, 20 \rbrace$. This parameter controls the total number of parameters of the model.
    \item The networks $\mathtt{A}$ and $\mathtt{B}$: their choice defines the dimension of space in which the forward ODE~\cref{rkhs_forward} is integrated, which is expected to have an important impact on the performances of the model (c.f.~\cref{sec:convergence}). Moreover, as the parameters $(W_k)$ are initialized at $0$, the performances of the model before training are those of the concatenation $\mathtt{B} \circ \mathtt{A}$. Without training, the classification error of $\mathtt{B} \circ \mathtt{A}$ is of $90\%$ while with enough training, it can be as good as $2\%$. We tested our model with different levels of training of $\mathtt{B} \circ \mathtt{A}$. 
\end{itemize}

\myparagraph{Results.}
\Cref{fig:32channels} shows the evolution of the performances of \modelnames while trained on the MNIST dataset. The decay of the Empirical Risk is directly related to the decay of the classification error. Without pretraining $\mathtt{A}$ and $\mathtt{B}$, our model already achieves up to $98\%$ accuracy
on the test set. When $\mathtt{A}$ and $\mathtt{B}$ are pretrained \modelname still improves on the starting accuracy: in this setting more than $99\%$ accuracy is reached. Most importantly,~\cref{fig:32channels} shows that not training the hidden layers inside residual blocks does not significantly hinders the performances in classification. Indeed, comparing the performances of ResNets with RKHS residuals and SHL residuals one observes a $1\%$ accuracy drop when training \modelname from scratch (\cref{fig:32channels_no_pretrain}) and $0.5\%$ accuracy when networks $\mathtt{A}$ and $\mathtt{B}$ are pretrained (\cref{fig:32channels_pretrain}).

Finally we showcase the relevance of the analysis of~\cref{sec:convergence} by training our model with a varying number of input channels in~\cref{fig:8channels}. We observe a significant drop in convergence of the empirical risk with $4$ channels compared with $8$ and $32$ channels. Non-convergence of the empirical risk also implies poorer performances in generalization. Such results are coherent with the convergence condition of~\cref{rkhs_init_cond}: augmenting the data dimension allows to have global convergence when the loss at initialization is too high.

\subsection{Experiments on CIFAR10}

We performed experiments on the CIFAR10 dataset, using an architecture inspired from ResNet18~\cite{he2016deep}.

\myparagraph{Implementation.}
Our architecture relies on the ResNet18 architecture~\cite{he2016deep} but residual blocks are changed and simplified (by removing the final non-linearity and the batch-normalization) to match the definition of \modelname
(\cref{def:rkhs_resnet}). Each residual block consists in the composition of a convolution $U$, a ReLU non-linearity and a convolution $W$. More precisely, for an input image $x$, the output of the $k^{th}$ layer reads:
\begin{align*}
    \mathcal{F}_k(x) = x + W_k \star \mathtt{ReLU}( U_k \star x ) ,
\end{align*}
where $U_k  \in \r^{C_{int} \times C \times 3 \times 3}$, $W_k  \in \r^{C \times C_{int} \times 3 \times 3}$ are convolution matrices, $C$ is the number of channels of the input image and $C_{int}$ is the number of channels of the hidden layer. When both convolutions $W_k$ and $U_k$ are trained, we refer to these residuals as \emph{Single Hidden Layer (SHL) residuals}. In the framework of \modelname, all convolutions $U_k$ are fixed and set to the same convolution $U$. We refer to it as \emph{RKHS residuals}.

Finally, ResNet18 consists of 4 blocks each containing 2 residual layers. We keep 2 of our residuals in the first, second and fourth block but stack an arbitrary number $D$ of residual layers in the third block. Thereby we refer to this third block as the NODE block, which performs the integration of~\cref{rkhs_forward}.

Note that compared to the residuals in the original ResNet18 architecture, batch-normalization at input and output of the residuals as well as ReLU non-linearities are removed. Moreover, in order to reproduce the framework of \emph{Random Fourier Features} (\cref{RFF}), the weights of $U$ are sampled as i.i.d. gaussians and rescaled by a $C_{int}^{-1/2}$ factor. Finally, the weights of the convolutions $W_k$ are initialized at $0$. Such an initialization corresponds in many ways to the one proposed in~\cite{zhang2018fixup}.

\myparagraph{Results.}
\cref{fig:CIFAR} reports the training of \modelname on the CIFAR10 dataset. \Cref{fig:CIFAR_rkhs} shows the training of \modelname (RKHS residuals) and is to be compared with~\cref{fig:CIFAR_shl} which shows the training of the same model but with trained hidden layers (SHL residuals). Our experiments show that similar performances can be achieved: both ResNets achieve up to $88\%$ accuracy on the test dataset. As a comparison, the ResNet18 original architecture can be trained to achieve up to $94\%$ accuracy.

Finally,~\cref{fig:CIFAR} also compares the performances of the model depending on the number of layers inside the NODE block. One observes significantly different behavior when there is no NODE ($1$ layer) and one there is ($10$ and $20$ layers): more layers are related to better performances both on the train dataset and on the test dataset and both when hidden layers are trained or not. However, one sees that the improvement related to adding more layers is limited: performances with $10$ and $20$ layers are very similar and a NODE block with $1$ layers already achieves $82\%$ accuracy RKHS residuals and $84\%$ accuracy with SHL residuals.

\begin{figure}
    \centering
    \begin{subfigure}{\textwidth}
        \centering
        \includegraphics[scale=1]{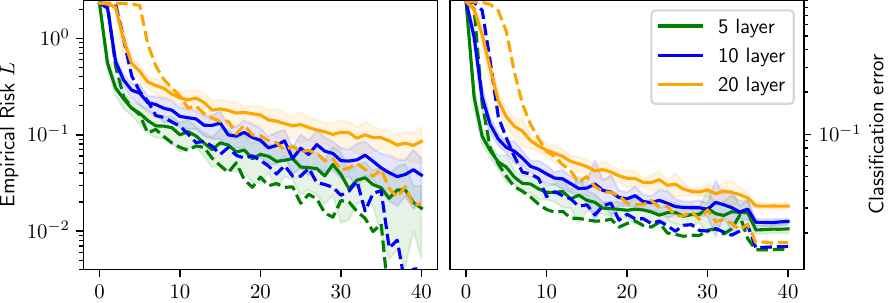}
        \vspace{-1.5em}
        \caption{Without pretraining (starting with $10\%$ accuracy)} \label{fig:32channels_no_pretrain}
    \end{subfigure}
    \begin{subfigure}{\textwidth}
        \centering
        \includegraphics[scale=1]{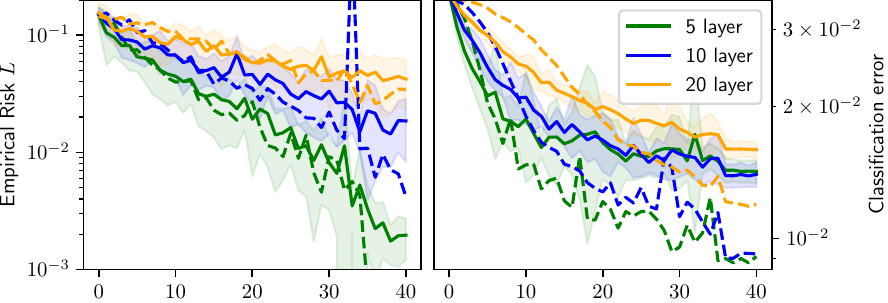}
        \vspace{-1.5em}
        \caption{With pretraining (starting with $97\%$ accuracy)} \label{fig:32channels_pretrain}
        %\vspace{-1.9em}
    \end{subfigure}
    \vspace{-0.5em}
    \caption{Performances of NODE with $32$ channels while trained on MNIST with SGD. Left column reports evolution of the empirical risk and right column reports evolution of classification error,  both for ResNets with RKHS residuals (plain) and SHL residuals (dashed). The $x$-axis is the number of pass through the dataset. Experiments are performed with different levels of pretraining of $\mathtt{A}$ and $\mathtt{B}$, corresponding to different starting accuracy ((a)-(b)), and with different number of layers. Learning rate and batch size are fixed, learning rate is divided by $10$ after $35$ iterations. Plots are average over 20 runs, lines are means and, for RKHS residuals, colored areas are mean $\pm$ one standard deviation.}
    \label{fig:32channels}
\end{figure}

\begin{figure}
    \centering
    \includegraphics[scale=1]{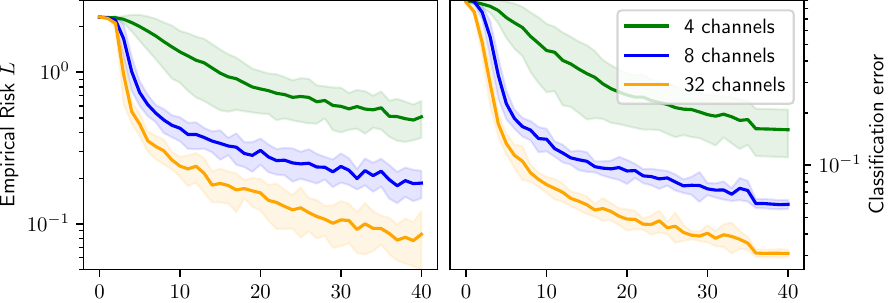}
    \vspace{-0.5em}
    \caption{Training of \modelname on MNIST with $20$ layers, $4$, $8$  and $32$ input channels $C$ and without pretraining. The $x$-axis is the number of pass through the dataset. The rate $C_{int} / C = 1$ is the same for each model. Learning rate and batch size are fixed, learning rate is divided by $10$ after $35$ iterations. Plots are average over 20 runs, lines are means and colored areas are mean $\pm$ one standard deviation.}
    \label{fig:8channels}
\end{figure}

\begin{figure}
    \begin{subfigure}{\textwidth}
        \centering
        \includegraphics[scale=1]{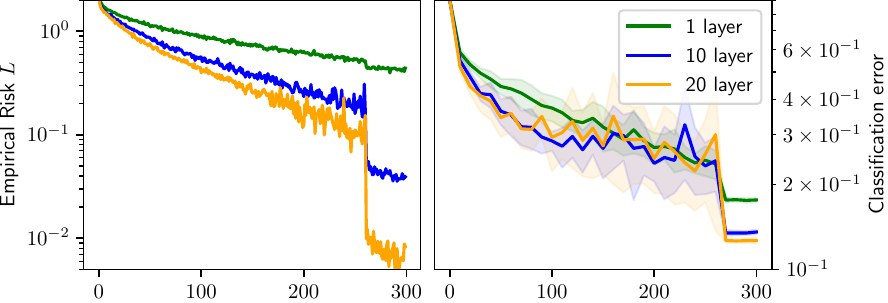}
        \vspace{-1.5em}
        \caption{Fixed hidden layers (RKHS)}
        \label{fig:CIFAR_rkhs}
        %\vspace{-1.9em}
    \end{subfigure}
    \begin{subfigure}{\textwidth} 
        \centering
        \includegraphics[scale=1]{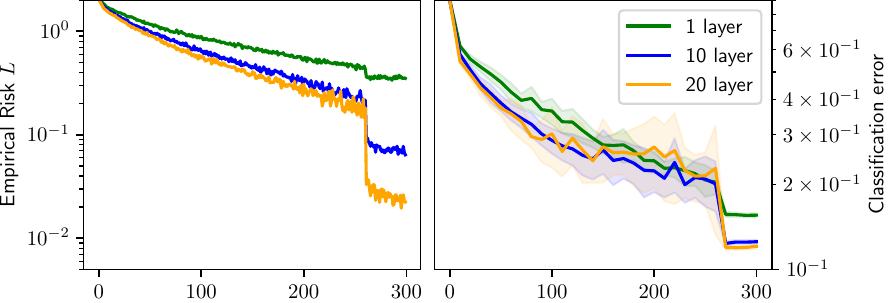}
        \vspace{-1.5em}
        \caption{Trained hidden layers (SHL)}
        \label{fig:CIFAR_shl}
        %\vspace{-1.9em}
    \end{subfigure}
    \vspace{-0.5em}
    \caption{Performances of \modelname while trained on CIFAR10 with SGD (256 images per batch). Left column reports evolution of the empirical risk on the train set and right column reports the classification error on the test set. The $x$-axis is the number of pass through the dataset. Learning rate and batch size are fixed, learning rate is divided by $10$ after $260$ iterations. Plots are average over 20 runs, lines are means and colored areas are mean $\pm$ one standard deviation.}
    \label{fig:CIFAR}
\end{figure}

\section{Proofs of~\cref{sec:overparameterized}} \label{sec:proof_overparameterized}

We give a proof of~\cref{thm:convergence_gen}. This essentially follows the proof given in~\cite{liu_loss_2021}.

\begin{proof}[Proof of~\cref{thm:convergence_gen}]

Assume the loss $L$ satisfies~\cref{def:PL_gen} with $M$ and $m$ and that~\cref{init_cond_gen} is satisfied at initialization $v^0 \in \r^m$. The proof proceeds by induction over the gradient step $k$

Assume the convergence rate and the regularization bound of~\cref{convergence_rate_gen} are satisfied for every $l \leq k$. Then at step $k+1$:
\begin{align*}
   \v v^{k+1} - v^0 \v & = \v \eta \sum_{l=0}^k \nabla L(v^l) \v
   \leq \eta \sum_{l=0}^k \v \nabla L(v^l) \v \\
   & \leq \eta \sum_{l=0}^k \sqrt{2 M(\v v^l \v) L(v^l)} . \\
\end{align*}
Using the induction hypothesis and setting $\mu = m(\v v^0 \v + R)$ we have:
\begin{align*}
   \v v^{k+1} - v^0 \v & \leq \eta \sqrt{2 M(\v v^0 \v + R) L(v^0) } \sum_{l=0}^k (1 - \eta \mu)^{-l/2} \\
   & \leq \eta \sqrt{2 M(\v v^0 \v + R) L(v^0) } (1 - \sqrt{1 - \eta \mu })^{-1} \\
   & \leq \frac{2}{\mu} \sqrt{2 M(\v v^0 \v + R) L(v^0) } \\
   & \leq R ,
\end{align*}
where the last inequality is~\cref{init_cond_gen}. We thus recovered the regularization bound of~\cref{convergence_rate_gen} at step $k+1$.

Moreover, because $v^{k+1}$ is located in $B(v^0,R)$ we have thanks to the smoothness assumption:
\begin{align*}
    L(v^{k+1}) & \leq L(v^k) - \eta \v \nabla L(v^k) \v^2 + \eta^2 \frac{\beta}{2} \v \nabla L(v) \v^2 \\
    & \leq L(v^k) - \frac{\eta}{2} \v \nabla L(v^k) \v^2 ,
\end{align*}
because $\eta \leq \beta^{-1}$. Thus using the lower bound in the PL inequality~\cref{PL_ineq_gen}:
\begin{align*}
    L(v^{k+1}) \leq L(v^k) ( 1 - m(\v v^0 \v + R) \eta ) ,
\end{align*}
which gives the convergence rate of~\cref{convergence_rate_gen} at step $k+1$ by induction on $k$.

\end{proof}

\section{Proofs of~\cref{sec:main}}

\subsection{About the definition of \modelname}

Before deriving proofs for the properties of our \modelname model, it is interesting to study carefully the well-posedness of~\cref{def:rkhs_resnet}. Indeed, because the control parameter $v$ is only integrable in time and not continuous, the Cauchy-Lipschitz
theorem does not ensure that there exist solutions to~\cref{rkhs_forward}. Instead we rely on a weaker notion of solution and use a result from Carathéodory (Section I.5 in~\cite{hale_ordinary_2009}).

\begin{proposition} \label{prop:well_posedness}
Let $V$ be some RKHS satisfying~\cref{ass:admissibility} and $v \in \Lspace$ be some control parameter. Then for every $x \in \r^d$ there exists a unique solution  $z$ of~\cref{rkhs_forward} in the weak sense of absolutely continuous functions. More precisely there exists a unique $z \in H^1(\tset, \r^q)$ such that for every $t \in \tset$:
\begin{align} \label{weak_forward}
    z_t = A x + \int_0^t v_s (z_s) \text{d}s \,.
\end{align}
\end{proposition}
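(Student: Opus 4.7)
The plan is to apply a Carathéodory-type existence–uniqueness theorem (e.g.\ Section I.5 in~\cite{hale_ordinary_2009}) to the integral equation in \cref{weak_forward}. The crucial inputs from \cref{ass:admissibility} are the pointwise bounds $\v v_t(z) \v \leq \kappa \v v_t \v_V$ and $\v Dv_t(z) \v_2 \leq \kappa \v v_t \v_V$, valid for almost every $t \in \tset$ and every $z \in \r^q$. Since $v \in \Lspace$, the scalar map $t \mapsto \kappa \v v_t \v_V$ lies in $L^2(\tset) \subset L^1(\tset)$, which simultaneously provides a uniform-in-$z$ integrable dominating function for $v_t(z)$ and an integrable Lipschitz constant in the $z$-variable. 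Measurability in $t$ of $t \mapsto v_t(z)$ for each fixed $z$ follows from Bochner measurability of $v$ together with continuity of the evaluation $w \mapsto w(z)$ from $V$ to $\r^q$, itself a consequence of $V \hookrightarrow W^{2,\infty} \hookrightarrow L^\infty$.

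For existence, I would set up the Picard operator $\Phi : C^0(\tset, \r^q) \to C^0(\tset, \r^q)$ defined by $\Phi(z)_t \eqdef A x + \int_0^t v_s(z_s) \, \text{d}s$ and endow its domain with the Bielecki-type weighted norm $\v z \v_\lambda \eqdef \sup_{t \in \tset} e^{-\lambda \int_0^t \v v_s \v_V \, \text{d}s} \v z_t \v$ for some $\lambda > \kappa$. The Lipschitz bound on $v_s$ then yields $\v \Phi(z) - \Phi(\tilde z) \v_\lambda \leq (\kappa/\lambda) \v z - \tilde z \v_\lambda$, turning $\Phi$ into a strict contraction on a complete metric space; Banach's fixed-point theorem produces a unique continuous fixed point $z$, and the integral identity upgrades continuity to absolute continuity on $\tset$.

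Uniqueness within the class of absolutely continuous solutions of \cref{weak_forward} then follows from a direct Grönwall argument on the difference $z_t - \tilde z_t$ of two candidate solutions, using the integrable Lipschitz constant $t \mapsto \kappa \v v_t \v_V$. For the $H^1$-regularity claimed in the statement, differentiating \cref{weak_forward} gives $\dot z_t = v_t(z_t)$ for a.e.\ $t$, and the bound $\v v_t(z_t) \v \leq \kappa \v v_t \v_V$ places $\dot z$ in $L^2(\tset, \r^q)$, so $z \in H^1(\tset, \r^q)$.

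The only subtle point is justifying that $t \mapsto v_t(z_t)$ is Lebesgue measurable whenever $z : \tset \to \r^q$ is continuous, so that the integrals defining $\Phi$ and the Grönwall estimate make sense. I expect this to be the main technical nuisance and would handle it via the Carathéodory framework: Bochner measurability of $t \mapsto v_t$ valued in $V$, combined with the continuous evaluation functionals on $V$ afforded by the $W^{2,\infty}$ embedding, gives joint measurability of $(t,z) \mapsto v_t(z)$, and composition with a continuous path $z_\cdot$ preserves measurability. Once this measurability is secured, every remaining step reduces to standard ODE estimates.
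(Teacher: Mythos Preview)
Your proposal is correct and follows essentially the same approach as the paper: both invoke the Carath\'eodory existence--uniqueness framework (the paper cites Theorem~5.1 of~\cite{hale_ordinary_2009} directly), use the bound $\v v_t(z) \v \leq \kappa \v v_t \v_V$ from~\cref{ass:admissibility} to obtain an integrable dominating function, and then upgrade to $H^1$ via the $L^2$ bound on $\dot z$. Your write-up is more detailed---you justify measurability carefully and supply a self-contained Picard/Bielecki contraction argument in lieu of a black-box citation---but the strategy and ingredients are the same.
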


\begin{proof}
The map $(t,z) \in \tset \times \r^q \mapsto v_t(z)$ is measurable and by~\cref{ass:admissibility} we have for every $t \in \tset$ and every $z \in \r^q$:
\begin{align*}
    \v v_t(z) \v \leq \kappa \v v_t \v_V\, ,
\end{align*}
whose upper-bound is integrable w.r.t. $t \in [0,1]$. Then, applying Theorem 5.1 of~\cite{hale_ordinary_2009} gives a unique absolutely continuous solution $z$ of~\cref{weak_forward}. Applying~\cref{ass:admissibility} once again, we have that $\Dot{z}$ is square integrable and thus $z$ is in $H^1$.
\end{proof}

In the paper, every equality implying derivatives has to be understood in the sense of weak derivatives of $H^1$ functions. In particular, this notion allows to perform integration by parts, which is used in the following proof of~\cref{prop:grad}.

\begin{proof}[Proof~\cref{prop:grad}]

Consider the optimization problem of minimizing the empirical risk of~\cref{empirical_risk} with $F$ the \modelname model of~\cref{def:rkhs_resnet} and a dataset $(x^i,y^i)_{\iset} \in (\r^d \times \r^{d'})^N$. Introducing for every index $i \in \llbracket 1,N \rrbracket$ the variables $z^i \in H^1(\tset, \r^q)$ solutions of~\cref{rkhs_forward}, this can be viewed as an optimisation problem over $((z^i)_i,v)$ under the constraint that~\cref{rkhs_forward} is satisfied:
\begin{align*}
    \underset{
    \substack{(z^i)_i \in H^1(\r^q)^N \\ v \in L^2(V)}
    }{\min} & \frac{1}{2N} \sum_{i=1}^N \v B z^i_1 - y^i \v^2 \\
    \text{with} \ \forall i \in \llbracket 1,N \rrbracket, & 
    \left\{
        \begin{array}{rcl}
            \Dot{z}^i_t & = & v_t(z^i_t) \ \forall t \in \tset  \\
            z^i_0 & = & A x^i . 
        \end{array}
    \right.
\end{align*}

Introducing the adjoint variables $(p^i)_i \in H^1(\r^q)^N$, the Lagrangian of the optimization problem is defined as:
\begin{align*}
    \mathcal{L}((z^i),(P^i),v) & \eqdef \sum_{i=1}^N \big( \frac{1}{2N} \v B z^i_1 - y^i \v + \int_0^1 \langle p^i_t, \Dot{z}^i_t - v_t(z^i_t) \rangle \text{d}t \big) \\
    & = \sum_{i=1}^N \big( \frac{1}{2N} \v B z^i_1 - y^i \v + \left[\langle p^i_t, z^i_t \rangle \right]_0^1 - \int_0^1 \langle \Dot{p}^i_t, z^i_t \rangle \text{d}t - \int_0^t \langle p^i_t , v_t(z^i_t) \rangle \text{d}t \big) ,
\end{align*}
where the second equality is established by integration by parts. Therefore, the condition for optimality over $z^i$ is equivalent to~\cref{rkhs_backward}. For every index $i$:
\begin{align*}
    \nabla_{z^i} \mathcal{L} = 0
    \Leftrightarrow  
    \left\{
        \begin{array}{rcl}
            \Dot{p}^i_t & = & - Dv_t(z^i_t)p^i_t  \\
            p^i_1 & = & - \frac{1}{N} B^\top (B z^i_1 - y^i) ,
        \end{array}
    \right.
\end{align*}
which has to be understand in the sense of weak solutions in $H^1$.

The gradient of $L$ is obtained by differentiating over the $v$ variable. Denoting $\delta_z^p$ the linear form $v \mapsto \langle v(z), p \rangle$, we have:
\begin{align*}
    \nabla L(v) & = \nabla_v \mathcal{L}((z^i),(p^i),v) \\
    & = - \sum_{i=1}^N K * \delta_{z^i}^{p^i} \\
    & = - \sum_{i=1}^N K(.,z^i)p^i ,
\end{align*}
with $K$ the kernel function of the RKHS $V$ and $K* : V^* \rightarrow V$ the associated isometry\footnote{The notation $K*$ reminds of convolution which is the case when the kernel is translation invariant.}. 

\end{proof}

\subsection{Proof of~\cref{prop:PL_rkhs}}\label{subsec:proof_PL_rkhs}

We prove here that for any given dataset $(x^i,y^i)_{\iset} \in (\r^d \times \r^{d'})^N$, the empirical risk $L$ associated with the \modelname model satisfies a (local) Polyak-Lojasiewicz property. As stated in \cref{prop:PL_rkhs}. The proof uses~\cref{ass:admissibility} to derive estimates on the solutions of~\cref{rkhs_forward} and~\cref{rkhs_backward}, which we give in the following lemma:

\begin{lemma} \label{lemma1}
Let $V$ satisfy~\cref{ass:admissibility} with constant $\kappa$ and let $v \in L^2(\tset,V)$ be some control parameter.

\textbf{(i)} Let $(z^i)_{\iset}$ be the solutions of~\cref{rkhs_forward} for some data inputs $(x^i)_{\iset} \in (\r^d)^N$. Then for every indices $i,j \in \llbracket 1, N \rrbracket$ and every time $t \in \tset$:
\begin{align}
    \v z^i - z^j \v \geq \sigma_{\min}(A) e^{- \kappa \v v \v_{L^2}} \v x^i - x^j \v \,.
\end{align}

\textbf{(ii)} Let $(p^i)_{\iset}$ be the solutions of~\cref{rkhs_backward} associated with $(z^i)_{\iset}$ with objective outputs $(y^i)_{\iset} \in (\r^{d'})^N$. Then for every $i \in \llbracket 1, N \rrbracket$ and every time $t \in \tset$:
\begin{align*}
    \frac{\sigma_{\min}(B^\top)}{N} e^{-\kappa \v v \v_L^2} \v B z^i_1 - y^i \v \ \leq \ 
    \v p^i_t \v \ \leq \ \frac{\sigma_{\max}(B^\top)}{N} e^{\kappa \v v \v_L^2} \v B z^i_1 - y^i \v\,.
\end{align*}
\end{lemma}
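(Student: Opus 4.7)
The plan is to derive both estimates from Grönwall-type arguments applied to the squared norms, leveraging that the embedding in \cref{ass:admissibility} gives a uniform Lipschitz bound $\|Dv_t\|_{2,\infty} \leq \kappa \|v_t\|_V$.

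For part (i), I would set $\delta_t \eqdef z^i_t - z^j_t$. By \cref{rkhs_forward} this function is in $H^1$ with weak derivative $\dot\delta_t = v_t(z^i_t) - v_t(z^j_t)$, and the mean value inequality together with \cref{ass:admissibility} yields $\|\dot\delta_t\| \leq \kappa \|v_t\|_V \|\delta_t\|$. Differentiating $t \mapsto \|\delta_t\|^2$ and using Cauchy–Schwarz inside the pairing gives $\bigl|\tfrac{d}{dt}\|\delta_t\|^2\bigr| \leq 2 \kappa \|v_t\|_V \|\delta_t\|^2$. Grönwall then produces the two-sided bound $\|\delta_t\| \geq \|\delta_0\| \exp(-\kappa \int_0^t \|v_s\|_V \, ds)$, and since $t \leq 1$, Cauchy–Schwarz in time gives $\int_0^t \|v_s\|_V\,ds \leq \|v\|_{L^2}$. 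Finally, $\|\delta_0\| = \|A(x^i-x^j)\| \geq \sigma_{\min}(A)\|x^i - x^j\|$, which yields the claimed inequality.

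For part (ii), the same mechanism applies to the backward equation. Writing $\tfrac{d}{dt} \|p^i_t\|^2 = -2 \langle p^i_t, Dv_t(z^i_t)^\top p^i_t\rangle$ and bounding this by $2 \|Dv_t(z^i_t)\|_2 \|p^i_t\|^2 \leq 2 \kappa \|v_t\|_V \|p^i_t\|^2$ thanks to \cref{ass:admissibility}, a Grönwall argument integrated from $t$ to $1$ furnishes the two-sided control
\begin{align*}
\|p^i_1\|^2 e^{-2\kappa \int_t^1 \|v_s\|_V \, ds} \leq \|p^i_t\|^2 \leq \|p^i_1\|^2 e^{2\kappa \int_t^1 \|v_s\|_V \, ds}.
\end{align*}
Using again $\int_t^1 \|v_s\|_V \, ds \leq \|v\|_{L^2}$ and the terminal condition $p^i_1 = -\tfrac{1}{N} B^\top(Bz^i_1 - y^i)$, together with $\sigma_{\min}(B^\top)\|Bz^i_1 - y^i\| \leq \|B^\top(Bz^i_1-y^i)\| \leq \sigma_{\max}(B^\top)\|Bz^i_1 - y^i\|$, delivers the two inequalities of (ii).

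The one subtlety worth flagging is that \cref{rkhs_forward} and \cref{rkhs_backward} only hold in the weak (Carathéodory) sense, so the derivative manipulations on $\|\delta_t\|^2$ and $\|p^i_t\|^2$ must be justified via absolute continuity of these scalar functions; this follows from $z^i, p^i \in H^1$ (as recorded in \cref{prop:well_posedness} and the analogous argument for the adjoint equation), which permits the chain rule and Grönwall's inequality in integral form. I expect this to be the only non-routine step; after that, the estimates are essentially mechanical.
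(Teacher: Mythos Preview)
Your proposal is correct and follows essentially the same approach as the paper: both use \cref{ass:admissibility} to bound $\|Dv_t\|_{2,\infty}$, apply Grönwall's lemma to the absolutely continuous squared norms $\|z^i_t-z^j_t\|^2$ and $\|p^i_t\|^2$, and conclude via the initial/terminal conditions together with the singular-value bounds on $A$ and $B^\top$. The only cosmetic difference is that the paper obtains the lower bounds by a short contradiction argument (apply the Grönwall upper bound in the reverse time direction), whereas you extract both sides directly from the two-sided differential inequality; your version is slightly more streamlined but not substantively different.
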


\begin{proof}[Proof of~\cref{lemma1}]

\textbf{Proof of (i)} Let $i,j \in \llbracket 1,N \rrbracket$. Assume by contradiction that for some time $t \in \tset$ we have:
\begin{align*}
        \v z^i_t - z^j_t \v < e^{- \kappa \v v \v_{L^2}} \v z^i_0 - z^j_0 \v .
\end{align*}
Then because $z^i$ and $z^j$ are absolutely continuous, $\v z^i - z^j \v^2$ is absolutely continuous and for any time $s \in \tset$:
\begin{align*}
    \v z^i_s - z^j_s \v^2 & = \v z^i_t - z^j_t \v^2 + 2 \int_t^s \langle v_r(z^i_r) - v_r(z^j_r), z^i_r - z^j_r \rangle \text{d}r \\
    & \leq \v z^i_t - z^j_t \v^2 + 2 \int_t^s \kappa \v v_r \v_V \v z^i_r - z^j_r \v^2 \text{d}r ,
\end{align*}
where the inequality follows from $\v Dv_r \v_{2,\infty} \leq \kappa \v v_r \v_V$. Applying Grönwall's lemma, we have:
\begin{align*}
    \v z^i_s - z^j_s \v^2 \leq \v z^i_t - z^j_t \v^2 e^{ 2 \kappa \v v \v_{L^2}} ,
\end{align*}
and by setting $s=0$:
\begin{align*}
    \v z^i_0 - z^j_0 \v^2 \leq \v z^i_t - z^j_t \v^2 e^{ 2 \kappa \v v \v_{L^2}} < \v z^i_0 - z^j_0 \v ,
\end{align*}
which is a contradiction. Therefore for any time $t \in \tset$:
\begin{align*}
    \v z^i_t - z^j_t \v \geq e^{- \kappa \v v \v_{L^2}} \v z^i_0 - z^j_0 \v ,
\end{align*}
and the result follows by considering the initial condition $z^i_0 = A x^i$.

\textbf{Proof of (ii)} Let $i \in \llbracket 1, N \rrbracket$ be any index and let $p^i$ be the solution of~\cref{rkhs_backward} with initial condition $p^i_1 = - \frac{1}{N} B^\top (B z^i_1 - y^i)$. Then because $p^i$ is absolutely continuous, $\v p^i \v$ is absolutely continuous and for any time $t \leq s \in \tset$:
\begin{align*}
    \v p^i_t \v^2 = \v p^i_1 \v^2 - 2 \int_1^t \langle Dv_s(z^i_s) p^i_s, p^i_s \rangle \text{d}s ,
\end{align*} 
so that using~\cref{ass:admissibility} we have:
\begin{align*}
    \v p^i_s \v^2 \leq \v p^i_t \v^2 + 2 \int_t^s \kappa \v v_r \v_V \v p^i_r \v^2 \text{d}r \,. \\
\end{align*}
Using Grönwall's lemma in the first inequality and setting $s=0$ we have:
\begin{align*}
    \v p^i_1 \v^2 \leq \v p^i_t \v^2 e^{2 \kappa \v v \v_{L^2}} ,
\end{align*}
and proceeding by contradiction (such as in (i)) we have:
\begin{align*}
    \v p^i_1 \v^2 \geq \v p^i_t \v^2 e^{- 2 \kappa \v v \v_{L^2}} .
\end{align*}
The result follows by considering the initial condition on $p^i_1$.
\end{proof}

Provided those estimates on $z^i$ and $p^i$, it remains to use~\cref{ass:universality} in order to conclude.

\begin{proof}[Proof of~\cref{prop:PL_rkhs}]
Let $v \in \Lspace$ and consider the form of the gradient of $L$ given by~\cref{prop:grad} with $(z^i)_{\iset}$ the solutions of~\cref{rkhs_forward} and $(p^i)_{\iset}$ the solutions of~\cref{rkhs_backward}. Let $t \in \tset$, then by definition of the norm in RKHSs:
\begin{align*}
    \v \nabla L(v)_t \v_V^2 = \sum_{1 \leq i,j \leq N} (p^i_t)^\top K(z^i_t, z^j_t) p^j_t\, ,
\end{align*}
where we recall that $K$ is the kernel associated with $V$. Noting $p \eqdef (p^i_t) \in \r^{Nq}$, the vector of the stacked $(p^i_t)_{\iset}$, and $\mathbb{K}$ the kernel matrix associated with the family of points $(z^i_t)_i$, we have:
\begin{align*}
    \v \nabla L(v)_t \v_V^2 & = \langle p, \mathbb{K} p \rangle \,.
\end{align*}
Then by~\cref{ass:universality}, there exists a non-increasing function $\lambda$ and a constant $\Lambda$ such that:
\begin{align*}
    \lambda (\max_{1 \leq i,j \leq N} \v z^i_t - z^j_t \v^{-1} ) \v p \v^2 \ \leq \ \v \nabla L(v)_t \v_V^2 \ \leq \ \Lambda \v p \v^2 .
\end{align*}
Using (i) in~\cref{lemma1} we have:
\begin{align*}
    \lambda (\max_{1 \leq i,j \leq N} \v z^i_t - z^j_t \v^{-1}) \geq \lambda(\sigma_{\min}(A)^{-1} \delta^{-1} e^{\kappa \v v \v_{L^2}}) ,
\end{align*}
where $\delta \eqdef \min_{1 \leq i,j \leq N} \v x^i - x^j \v$ is the data separation. Finally the result follows by using (ii). More precisely:
\begin{align*}
    \v p \v^2 & = \sum_{i=1}^N \v p^i_t \v^2 \\
    & \leq \frac{\sigma_{\max}(B^\top)^2}{N^2} e^{2 \kappa \v v \v_{L^2}} \sum_{i=1}^N \v B z^i_1 - y^i \v^2 \\
    & = 2 \frac{\sigma_{\max}(B^\top)^2}{N} e^{2 \kappa \v v \v_{L^2}} L(v) ,
\end{align*}
and in the same manner:
\begin{align*}
     \v p \v^2 \geq 2 \frac{\sigma_{\min}(B^\top)^2}{N} e^{- 2 \kappa \v v \v_{L^2}} L(v) .
\end{align*}

\end{proof}

\subsection{Proof of~\cref{thm:main}} \label{subsec:proof_main}

\Cref{thm:main} is a direct consequence of~\cref{prop:PL_rkhs}. In order to apply~\cref{thm:convergence_gen}, it suffices to show that $L$ satisfies some smoothness assumption as defined in~\cref{def:smoothness}:

\begin{property}[Smoothness of $L$] \label{prop:smoothness}

Let $V$ be some RKHS satisfying~\cref{ass:admissibility}. Let $L$ be the empirical risk defined on $\Lspace$ and associated with the \modelname model. Then there exists a continuous function $\mathbf{C} : \r_+ \rightarrow \r_+^*$ such that for every $R \geq 0$ and every $v,\Bar{v} \in \Lspace$ with $\v v \v_{L^2}, \v \Bar{v} \v_{L^2} \leq R$:
\begin{align*}
    \v \nabla L(v) - \nabla L(\Bar{v}) \v_{L^2} \leq \mathbf{C}(R) \v v - \Bar{v} \v_{L^2}.
\end{align*}
\end{property}

We note $\kappa$ the constant associated with~\cref{ass:admissibility}. The proof of~\cref{prop:smoothness} relies on the following lemma:

\begin{lemma} \label{lemma2}
Let $v, \Bar{v} \in \Lspace$ be some control parameters and $R \geq 0$ be some radius such that $\v v \v_{L^2}, \v \Bar{v} \v_{L^2} \leq R$. Let $(x,y) \in \r^d \times \r^{d'}$ be some pair of data input / objective output.

\textbf{(i)} Let $z, \Bar{z}$ be solutions of~\cref{rkhs_forward} with parameter $v$ and $\Bar{v}$ respectively and with the same initial condition $Ax$, then for any $t \in \tset$:
\begin{align*}
    \v z_t - \Bar{z}_t \v \leq \kappa e^{\kappa R} \v v - \Bar{v} \v_{L^2} .
\end{align*}

\textbf{(ii)} Let $p, \Bar{p}$ be solutions of~\cref{rkhs_backward} with parameter $v$ and $\Bar{v}$ respectively and with initial condition $\frac{1}{N}B^\top (B z_1 - y )$ and $\frac{1}{N}B^\top (B \Bar{z}_1 - y )$, then for any $t \in \tset$:
\begin{multline*}
    \v p_t - \Bar{p}_t \v \leq \\
    \frac{\kappa e^{2 \kappa R} \v B \v_2}{N} \v v - \Bar{v} \v_{L^2} \big[ \v B \v_2 + \v B ( \Bar{z}_1 - y ) \v (1 + R e^{\kappa R}) \big]\, .
\end{multline*}
\end{lemma}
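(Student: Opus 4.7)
My plan is to prove both statements by writing the solutions in integral form, splitting each difference into ``same parameter, different state'' and ``different parameter, same state'' parts, and then closing the loop with Grönwall's inequality.

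For part (i), I would write
\[
z_t - \bar z_t = \int_0^t \bigl[v_s(z_s) - \bar v_s(\bar z_s)\bigr]\,\mathrm{d}s = \int_0^t \bigl[v_s(z_s) - v_s(\bar z_s)\bigr]\,\mathrm{d}s + \int_0^t \bigl[v_s(\bar z_s) - \bar v_s(\bar z_s)\bigr]\,\mathrm{d}s.
\]
The first term is bounded by $\int_0^t \kappa \|v_s\|_V \|z_s-\bar z_s\|\,\mathrm{d}s$ using the mean value theorem and the bound $\|Dv_s\|_{2,\infty} \leq \kappa \|v_s\|_V$ from \cref{ass:admissibility}. The second is pointwise bounded by $\kappa \|v_s - \bar v_s\|_V$ from the same assumption, and integrating with Cauchy--Schwarz on $[0,1]$ gives $\kappa \|v-\bar v\|_{L^2}$. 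Grönwall then produces the factor $\exp(\int_0^1 \kappa \|v_s\|_V\,\mathrm{d}s) \leq e^{\kappa R}$, since $\int_0^1 \|v_s\|_V\,\mathrm{d}s \leq \|v\|_{L^2} \leq R$, and the stated bound follows.

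For part (ii), I would apply the same strategy backwards in time. The integral form reads
\[
p_t - \bar p_t = (p_1 - \bar p_1) + \int_t^1 \bigl[Dv_s(z_s)^{\!\top} p_s - D\bar v_s(\bar z_s)^{\!\top}\bar p_s\bigr]\,\mathrm{d}s.
\]
The boundary term satisfies $\|p_1-\bar p_1\| \leq \tfrac{1}{N}\|B\|_2^2 \|z_1 - \bar z_1\|$, which by part (i) is at most $\tfrac{\kappa e^{\kappa R}\|B\|_2^2}{N}\|v-\bar v\|_{L^2}$. In the integrand I would add and subtract $Dv_s(z_s)^{\!\top}\bar p_s$ to separate a term proportional to $\|p_s-\bar p_s\|$ (which gets Grönwall'd away) from a source term involving $\|Dv_s(z_s) - D\bar v_s(\bar z_s)\|_2\,\|\bar p_s\|$. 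The latter $Dv_s - D\bar v_s$ difference splits once more as $(Dv_s(z_s) - Dv_s(\bar z_s)) + (Dv_s(\bar z_s) - D\bar v_s(\bar z_s))$, bounded by $\kappa\|v_s\|_V\|z_s-\bar z_s\|$ and $\kappa\|v_s-\bar v_s\|_V$ respectively via the $D^2 v$ and $Dv$ estimates of \cref{ass:admissibility}.

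To close, I would plug in the uniform upper bound $\|\bar p_s\| \leq \tfrac{\|B\|_2}{N} e^{\kappa R}\|B\bar z_1 - y\|$ from \cref{lemma1}(ii) and the bound on $\|z_s - \bar z_s\|$ from part (i) just established. Integrating in $s$ and using $\int_0^1 \|v_s\|_V\,\mathrm{d}s \leq R$, $\int_0^1 \|v_s - \bar v_s\|_V\,\mathrm{d}s \leq \|v-\bar v\|_{L^2}$, the source accumulates to a linear combination of $\|B\|_2$ and $\|B\bar z_1 - y\|(1 + Re^{\kappa R})$ times $\tfrac{\kappa e^{\kappa R}\|B\|_2}{N}\|v-\bar v\|_{L^2}$, and a final application of Grönwall on the backward interval produces the extra $e^{\kappa R}$ factor, yielding the claimed $e^{2\kappa R}$ in front. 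The main delicacy is just bookkeeping: keeping the two layers of splitting (parameter vs.\ state, and state for $v_s$ vs.\ state for $\bar v_s$) straight and carefully tracking the constants so they coalesce into the stated form.
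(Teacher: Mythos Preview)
Your proposal is correct and follows essentially the same route as the paper: the same add-and-subtract decomposition of the integrands, the same use of \cref{ass:admissibility} to bound each piece, and the same backward Gr\"onwall argument in (ii) combined with the estimates from \cref{lemma1}(ii) and part (i). The bookkeeping you describe matches the paper's proof line by line.
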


\begin{proof}[Proof of~\cref{lemma2}]

\textbf{Proof of (i)} For every time $t \in \tset$ we have:
\begin{align*}
    z_t - \Bar{z}_t & = \int_0^t \big( v_s(z_s) - \Bar{v}_s(\Bar{z}_s) \big) \text{d}s \\
    & = \int_0^t \big( v_s(z_s) - v_s (\Bar{z}_s) + v_s (\Bar{z}_s) - \Bar{v}_s(\Bar{z}_s) \big) \text{d}s\, ,
\end{align*}
and by triangle inequality:
\begin{align*}
    \v z_t - \Bar{z}_t \v & \leq \int_0^t \big( \v v_s(z_s) - v_s (\Bar{z}_s) \v + \v v_s (\Bar{z}_s) - \Bar{v}_s(\Bar{z}_s) \v \big) \text{d}s \\
    & \leq \int_0^t \kappa \v v_s \v_V \v \v z_s - \Bar{z}_s \v \text{d}s + \int_0^t \kappa \v v_s - \Bar{v}_s \v_V \text{d}s \,,
\end{align*}
where we used~\cref{ass:admissibility} in the second inequality. Therefore, by Grönwall's lemma:
\begin{align*}
    \v z_t - \Bar{z}_t \v & \leq \kappa e^{\kappa \v v \v_{L^2}} \int_0^t \v v_s - \Bar{v}_s \v_V \text{d}s \\
    & \leq \kappa e^{\kappa R} \v v - \Bar{v} \v_{L^2} \, .
\end{align*}

\textbf{Proof of (ii)} For any $t \in \tset$ we have:
\begin{align*}
    p_t - \Bar{p}_t & =  (p_1 - \Bar{p}_1) - \int_1^t \big( Dv_s(z_s)^\top p_s - D\Bar{v}_s (\Bar{z}_s)^\top \Bar{p}_s \big) \text{d}s \\
    & = (p_1 - \Bar{p}_1) - \int_1^t \left[  Dv_s(z_s)^\top ( p_s -\Bar{p}_s )
     + \big( Dv_s (z_s) - Dv_s( \Bar{z}_s) \big)^\top \Bar{p}_s
     + \big( D v_s( \Bar{z}_s) - D\Bar{v}_s(\Bar{z}_s) \big)^\top \Bar{p}_s \right] \text{d}s \,,
\end{align*}
and using the triangle inequality and~\cref{ass:admissibility}:
\begin{align*}
    \v p_t - \Bar{p}_t \v \leq \v p_1 - \Bar{p}_1 \v + \int_t^1 \left[ \kappa \v v_s \v_V \v p_s - \Bar{p}_s \v  + \kappa \v v_s \v_V \v z_s - \Bar{z}_s \v \v \Bar{p}_s \v +  \kappa \v v_s - \Bar{v}_s \v_V \v \Bar{p}_s \v \right] \text{d}s .
\end{align*}
Then, using Grönwall's lemma backward in time gives:
\begin{align*}
    \v p_t - \Bar{p}_t \v \leq \v p_1 - \Bar{p}_1 \v e^{\kappa \v v \v_{L^2}}
    + \kappa e^{\kappa \v v \v_{L^2}} \int_t^1  \v v_s - \Bar{v}_s \v_V \v \Bar{p}_s \v \text{d}s
    + \kappa e^{\kappa \v v \v_{L^2}} \int_t^1 \v v_s \v_V \v z_s - \Bar{z}_s \v \v \Bar{p}_s \v \text{d}s .
\end{align*}
On one hand, because of (i) we have for every $s \in \tset$:
\begin{align*}
    \v z_s - \Bar{z}_s \v & \leq \kappa e^{\kappa R} \v v - \Bar{v} \v_{L^2},
\end{align*}
and also:
\begin{align*}
    \v p_1 - \Bar{p}_1 \v & = \frac{1}{N} \v B^\top B ( z_1 - \Bar{z}_1 ) \v \\
    & \leq \frac{\v B \v_2^2}{N} \kappa e^{\kappa R} \v v - \Bar{v} \v_{L^2} .
\end{align*}
On the other hand, recalling (ii) of~\cref{lemma1}, for every $s \in \tset$:
\begin{align*}
    \v \Bar{p}_s \v \leq \frac{\sigma_{\max}(B^\top)}{N} e^{\kappa R} \v B z_1 - y \v .
\end{align*}
Putting these estimates in the preceding inequality gives:
\begin{align*}
    \v p_t - \Bar{p}_t \v \leq \left[  \frac{\v B \v_2^2}{N}  \kappa e^{2 \kappa R}
    + \frac{ \sigma_{\max}(B^\top)}{N} \kappa e^{2 \kappa R} \v B ( \Bar{z}_1 - y) \v
    + R \frac{\sigma_{\max}(B^\top)}{N} \kappa^2 e^{3 \kappa R}  \v B ( \Bar{z}_1 - y) \v \right] \v v - \Bar{v} \v_{L^2} ,
\end{align*}
which is the desired result.

\end{proof}

\begin{proof}[Proof of~\cref{prop:smoothness}]

Let $v, \Bar{v} \in \Lspace$ with $\v v \v_{L^2}, \v \Bar{v} \v_{L^2} \leq R$. Then taking the same notation as in~\cref{lemma2}, we have for any $t \in \tset$:
\begin{align*}
    \nabla L(v)_t - \nabla L(\Bar{v})_t  & = \sum_{i=1}^N K(., z^i_t)p^i_t - \sum_{i=1}^N K(. \Bar{z}^i_t) \Bar{p}^i_t \\
    &  = \sum_{i=1}^N K(., z^i_t) (p^i_t - \Bar{p}^i_t)
    + \sum_{i=1}^N ( K(., z^i_t) - K(. \Bar{z}^i_t) ) \Bar{p}^i_t ,
\end{align*}
and we can write $\v \nabla L(v)_t - \nabla L(\Bar{v})_t \v_V \leq T_1 + T_2$ with:
\begin{align*}
    T_1 \eqdef \v \sum_{i=1}^N K(., z^i_t) (p^i_t - \Bar{p}^i_t) \v_V, \quad
    T_2 \eqdef \v \sum_{i=1}^N ( K(., z^i_t) - K(. \Bar{z}^i_t) ) \Bar{p}^i_t \v_V .
\end{align*}

First we consider deriving an upper bound on $T_1$. Note that by the definition of the norm in RKHSs and by~\cref{ass:universality} we have:
\begin{align*}
    T_1^2 = \sum_{1 \leq i,j \leq N} (p^i_t - \Bar{p}^i_t)^\top K(z^i_t, z^j_t) (p^j_t - \Bar{p}^j_t) \leq \Lambda \sum_{i=1}^N \v p^i_t - \Bar{p}^i_t \v^2 .
\end{align*}
Therefore, using (ii) from~\cref{lemma2} to bound $\v p^i_t - \Bar{p}^i_t \v$ for every index $i$ we get:
\begin{align*}
    T_1^2 \leq \Lambda \mathbf{C}_1^2 \v v - \Bar{v} \v_{L^2}^2 ,
\end{align*}
with:
\begin{align*}
    \mathbf{C}_1^2 & = \sum_{i=1}^N  \frac{\kappa^2 e^{4 \kappa R} \v B \v_2^2}{N^2} \big[ \v B \v_2 + \v B ( \Bar{z}^i_1 - y ) \v (1 + R e^{\kappa R}) \big]^2 \\
    & \leq \sum_{i=1}^N  \frac{2 \kappa^2 e^{4 \kappa R} \v B \v_2^2}{N^2} \big[ \v B \v_2^2 + \v B ( \Bar{z}^i_1 - y ) \v^2 (1 + R e^{\kappa R})^2 \big] \\
    & \leq \frac{2 \kappa^2 e^{4 \kappa R} \v B \v_2^4}{N} + \frac{4 \kappa^2 e^{4 \kappa R} \v B \v_2^2}{N} (1 + R e^{\kappa R})^2 L(\Bar{v}) ,
\end{align*}
where we recognised $L(\Bar{v})$ in the third line. By continuity of $L$ we can define for every $R \geq 0$:
\begin{align*}
    L^*(R) \eqdef \sup_{\v v \v_{L^2} \leq R} L(v) .
\end{align*}
And therefore:
\begin{align*}
    \mathbf{C}_1^2 & \leq \frac{2 \kappa^2 e^{4 \kappa R} \v B \v_2^4}{N} + \frac{4 \kappa^2 e^{4 \kappa R} \v B \v_2^2}{N} (1 + R e^{\kappa R})^2 L^*(R) =: \mathbf{C}_3(R)^2 .
\end{align*}

We then consider deriving an upper-bound on $T_2$. By triangle inequality:
\begin{align*}
    T_2 \leq \sum_{i=1}^N \v ( K(., z^i_t) - K(.,\Bar{z}^i_t) ) \Bar{p^i}_t \v_V .
\end{align*}
Consider any $\alpha \in V$, then for any index $i \in \llbracket 1, N \rrbracket$, by the reproducing property:
\begin{align*}
    \langle ( K(., z^i_t) - K(., \Bar{z}^i_t)) \Bar{p}^i_t, \alpha \rangle_V & = \langle \alpha(z^i_t) - \alpha(\Bar{z}^i_t), \Bar{p}^i_t \rangle \\
    & \leq \kappa \v \alpha \v_V \v z^i_t - \Bar{z}^i_t \v \v \Bar{p}^i_t \v ,
\end{align*}
where we used the Cauchy-Schwarz inequality and~\cref{ass:admissibility} applied to $\alpha$. Therefore, by duality:
\begin{align*}
    \v ( K(., z^i_t) - K(.,\Bar{z}^i_t) ) \Bar{p^i}_t \v_V \leq \kappa \v z^i_t - \Bar{z}^i_t \v \v \Bar{p}^i_t \v .
\end{align*}
Using the estimates of~\cref{lemma1} and~\cref{lemma2} we get:
\begin{align*}
    \v ( K(., z^i_t) - K(.,\Bar{z}^i_t) ) \Bar{p^i}_t \v_V \leq \frac{\kappa^2 e^{2\kappa R} \v B \v_2}{N} \v B \Bar{z}^i_1 - y^i \v \v v - \Bar{v} \v_{L^2} .
\end{align*}
And finally, using Cauchy-Schwarz inequality and recognizing $L(\Bar{v})$ we have:
\begin{align*}
    T_2^2 & \leq N \sum_{i=1}^N \v ( K(., z^i_t) - K(.,\Bar{z}^i_t) ) \Bar{p^i}_t \v_V^2 \\
    & \leq \mathbf{C}_2^2 \v v - \Bar{v} \v_{L^2}^2 ,
\end{align*}
with:
\begin{align*}
    \mathbf{C}_2^2 & = 2 \kappa^4 e^{4 \kappa R} \v B \v_2^2 L(\Bar{v}) \\
    & \leq 2 \kappa^4 e^{4 \kappa R} \v B \v_2^2 L^*(R) =: \mathbf{C}_4(R)^2 .
\end{align*}

Therefore we obtain the result by setting:
\begin{align*}
    \mathbf{C}(R) = \big[ \Lambda \mathbf{C}_3(R)^2 + \mathbf{C}_4(R)^2 \big]^{1/2} .
\end{align*}

\end{proof}

Provided with~\cref{prop:smoothness}, we can finish the proof of~\cref{thm:main}.

\begin{proof}[Proof of~\cref{thm:main}]

By~\cref{prop:PL_rkhs}, $L$ satisfies the PL inqualities of~\cref{def:PL_gen} and the proof is a direct corollary of~\cref{thm:convergence_gen}. It only remains to show that the smoothness condition of~\cref{def:smoothness} is verified.

Let $v, \Bar{v} \in \Lspace$ such that $\v v \v_{L^2}, \v \Bar{v} \v_{L^2} \leq R$ for some radius $R \geq 0$. Then we have:
\begin{align*}
    L(\Bar{v}) = & L(v) + \int_0^1  \nabla L(v + t(\Bar{v} - v)). (\Bar{v} - v ) \text{d}t \\
    = & L(v) + \nabla L(v) . (\Bar{v} - v ) \\
    & + \int_0^1 \big[ \nabla L(v+t(\Bar{v}-v)) - \nabla L(v) \big] \cdot (\Bar{v} - v) \text{d}t .
\end{align*}
Using~\cref{prop:smoothness}, there exists some $\mathbf{C}(R)$ such that:
\begin{align*}
   \v \nabla L(v+t(\Bar{v}-v)) - \nabla L(v) \v_{L^2} \leq t \mathbf{C}(R) \v \Bar{v} - v \v_{L^2} .
\end{align*}
This gives the inequality:
\begin{align*}
    L(\Bar{v}) \leq L(v) + \nabla L(v) \cdot (\Bar{v} - v ) + \frac{\mathbf{C}(R)}{2} \v \Bar{v} - v \v_{L^2}^2\, ,
\end{align*}
which is the desired result.
\end{proof}

\section{Proofs of~\cref{sec:convergence}} \label{sec:convergence_proof}

The results in~\cref{sec:convergence} show how the condition for convergence in~\cref{rkhs_init_cond} can be enforced by considering suitable RKHSs of vector-fields and suitable matrices $A$ and $B$. We give in~\cref{subsec:kernels} examples of suitable kernels.

In the following, we assume that for every $q \geq 1$ we are provided with a function $k_q : \r_+ \rightarrow \r$ such that the induced symmetric rotationally-invariant kernel $K_q$ defined by:
\begin{align}
    \forall z,z' \in \r^q, \ K_q(z,z') = k_q(\v z-z' \v) \Id_q ,
\end{align}
is a positive-definite kernel over $\r^q$. Without loss of generality, one can assume $k_q$ to be normalized, that is $k_q(0) = 1$. We note $V_q$ the vector-valued RKHS associated with $K_q$. The properties of $V_q$ are then entirely determined by $k_q$. In particular, smoothness of the kernel at $0$ implies regularity of the vector-fields in $V_q$:

\begin{property}[Regularity of $V_q$] \label{prop:kappa}
Let $k_q : \r_+ \rightarrow \r$ be some function defining a positive symmetric kernel $K_q$. If $k_q$ is $4$ times differentiable at $0$, with $k_q'(0) = k_q^{(3)}(0) = 0$. Then $V_q$ satisfies~\cref{ass:admissibility} with constant $\kappa = \sqrt{k_q(0)} + \sqrt{-k_q''(0)} + \sqrt{k_q^{(4)}(0)}$. 
\end{property}

As a consequence, if the derivatives of $k_q$ can be bounded uniformly over $q$ then $V_q$ satisfies~\cref{ass:admissibility} with some constant $\kappa$ independent of $q$. This, is the case for the Matérn kernel $k$ defined in~\cref{matern_kernel}.

\begin{proof}

The proof proceeds by duality arguments. For $q \geq 1$, consider some $v \in V_q$. Then for any $z \in \r^q$ and any $\alpha \in V_q$, by the reproducing properties of RKHSs:
\begin{align*}
    \langle v(z), \alpha \rangle & = \langle v, K_q(.,z) \alpha \rangle_{V_q} \\
    & \leq \v v \v_{V_q} \v K_q(.,z) \alpha \v_{V_q} \\
    & = \v v \v_{V_q} \big( \langle \alpha, K_q(z,z) \alpha \rangle \big)^{1/2} \\
    & \leq \sqrt{k_q(0)} \v v \v_{V_q} \v \alpha \v .
\end{align*}
Therefore, by duality $\v v(z) \v \leq \sqrt{k_q(0)} \v v \v_{V_q}$ and then by taking the supremum over $z \in \r^q$:
\begin{align*}
    \v v \v_{\infty} \leq k_q(0) \v v \v_{V_q}.
\end{align*}
Then for any $z \in \r^q$ any $\alpha, \beta \in \r^q$ and any $h \in \r_+$:
\begin{align*}
    & \langle v(z+h\alpha) - v(z), \beta \rangle  \\
    = \ & \langle v, ( K_q(., z+h\alpha) - K_q(.,z) ) \beta \rangle \\
    \leq \ & \v v \v_{V_q} \v ( K_q(., z+h\alpha) - K_q(.,z) ) \beta \v_{V_q} .
\end{align*}
In the r.h.s we have using Taylor's expansion of $k_q$ at $0$:
\begin{align*}
    \v ( K_q(., z+h\alpha) - K_q(.,z) ) \beta \v_{V_q}^2
    & = 
    \left(
        \begin{array}{c}
            \beta  \\
            - \beta
        \end{array}
    \right)^\top
    \left(
        \begin{array}{cc}
            k_q(0) Id_q & k_q(h \v \alpha \v) Id_q \\
            k_q(h \v \alpha \v) Id_q & k(0) Id_q
        \end{array}
    \right)
    \left(
        \begin{array}{c}
            \beta  \\
            - \beta
        \end{array}
    \right) \\
    & = 2 \v \beta \v^2 ( k_q(0) - k_q(h \v \alpha \v) ) \\
    & = - \v \beta \v^2 h^2 \v \alpha \v^2 k_q''(0) + o(h^2) .
\end{align*}
Taking the limit $h \rightarrow 0$:
\begin{align*}
    \langle Dv(z) \alpha, \beta \rangle & = \lim_{h \rightarrow 0} h^{-1} \langle v(z+h\alpha) - v(z), \beta \rangle \\
    & \leq \sqrt{-k_q''(0)} \v v \v_{V_q} \v \alpha \v \v \beta \v ,
\end{align*}
and therefore $\v Dv(z) \v_2 \leq \sqrt{-k_q''(0)} \v v \v_{V_q} $.

Finally, let us bound $\v D^2 v \v_{2, \infty}$. For any $z \in \r^q$ any $\alpha, \beta, \gamma \in \r^q$ and any $h,l \geq 0$ we have in the same manner:
\begin{align*}
   & \langle v(z+h\beta + l \alpha) - v(z+h\beta) - v(z+l\alpha) + v(z), \gamma \rangle \\
     & \quad \leq \v v \v_{V_q} \v \beta \v \v \alpha \v \v \gamma \v h l \sqrt{k_q^{(4)}(0)} + o (h l) 
\end{align*}
where the second line is obtained by Taylor expansion of $k_q$ at $0$. Thus, taking the limit $h,l \rightarrow 0$:
\begin{align*}
    \langle D^2v(z)(\alpha, \beta), \gamma \rangle & =  \lim_{h,l \rightarrow 0} h^{-1} l^{-1} \langle v(z+h\beta + l \alpha) - v(z+h\beta) - v(z+l\alpha) + v(z), \gamma \rangle \\
    & \leq \sqrt{k_q^{(4)}(0)} \v v \v_{V_q} \v \beta \v \v \alpha \v \v \gamma \v ,
\end{align*}
and therefore $\v D^2 v(z) \v_2 \leq \sqrt{k_q^{(4)}(0)} \v v \v_{V_q} $.

Setting $\kappa = \sqrt{k_q(0)} + \sqrt{-k_q''(0)} + \sqrt{k_q^{(4)}(0)}$ we obtain the result. Moreover, choosing appropriate $v$ in the above proof, inequalities become sharp and one observes that the constant $\kappa$ is optimal.

\end{proof}

\subsection{Enforcing convergence with high dimensional lifting and universal kernels} \label{subsec:proof_q}

Here we investigate the dependency of~\cref{rkhs_init_cond} w.r.t. $q$, $\delta$ and $N$ for the class of RKHS $V_q$ and thereby recover the proof of~\cref{prop:embedding}.

We make the following assumption concerning the decay of $k_q$ at infinity:
\begin{assumption}[Decay of $k_q$]\label{ass:decrease}
    For every $q \geq 1$, $k_q(x)$ tends to $0$ when $x$ tends to infinity and we note $\beta_{q,N} > 0$ s.t.:
    \begin{align*}
        \forall x \geq \beta_{q,N}, \ | k_q(x)| \leq \frac{1}{2N} .
    \end{align*}
    Moreover for fixed $N$ we assume that
    \begin{align*}
        \beta_{q,N} = o_{q \rightarrow + \infty} (q^{1/4}).
    \end{align*}
\end{assumption}

\subsubsection{Lifting matrices} \label{subsubsec:lifting}

For any $q \geq 1$ we consider here the matrices:
\begin{align*}
    A_q & \eqdef q^{-1/4}(\Id_d, ..., \Id_d, 0 )^\top \in \r^{q \times d}, \\
    B_q & \eqdef q^{1/4} (\Id_{d'}, 0 ... 0 ) \in \r^{d' \times q} ,
\end{align*}
where there are $\lfloor q/d \rfloor$ copies of $\Id_d$ in $A_q$. In particular we have:
\begin{align*}
    \sigma_{\min}(A_q) & = q^{-1/4} \sqrt{\lfloor q/d \rfloor} \simeq q^{1/4}, \\
    \sigma_{\min}(B_q^\top) & = \sigma_{\max}(B_q^\top) = q^{1/4}
\end{align*}
and $B_q A_q \in \r^{d' \times d}$ is independent of $q$. We also consider for every $q \geq 1$ some control parameter initialization $V^0_q \in L^2(V_q)$ such that $\v v^0_q \v_{L^2} \leq R_0 q^{-1/4}$ and assume the data distribution to be compactly supported.

\begin{proposition} \label{prop:embedding_gen}
Let $R > 0$ and $d, d' \geq 1$.
Assume~\cref{ass:decrease} is satisfied, $V_q$ satisfies~\cref{ass:admissibility} with constant $\kappa$ independent of $q$ and there exists $R_0 > 0$ s.t. $\v v^0_q \v \leq R_0 q^{-1/4}$ for every $q \geq 1$. \\
Then there exists some constant $C > 0$ so that for any $N \geq 2$ and any $\delta \in (0,1]$, \cref{rkhs_init_cond} is satisfied with matrices $A_q, B_q$ and $\kappa, \lambda, \Lambda$ associated with the RKHS $V_q$ as soon as:
\begin{align} \label{bound_gen_q}
    q \geq C N^4, \ \text{and} \ q \geq C \delta^{-4}\beta_{q,N}^4 .
\end{align}
\end{proposition}

Note that the second condition in~\cref{bound_gen_q} can always be ensured for large enough $q$ thanks to~\cref{ass:decrease}. In the case of the Matérn kernel $k$ defined in~\cref{matern_kernel}, such an assumption is verified because it has exponential decay and it is independent of $q$. Hence, \cref{prop:embedding} is a direct consequence of~\cref{prop:embedding_gen}.

\begin{proof}[Proof of~\cref{prop:embedding_gen}]
Let $q \geq 1$. Using the fact that $d^2 \lfloor q/d \rfloor^2 \geq q(q-2d)$, considering:
\begin{align} \label{embedding_gen_qbound}
    q \geq 2d + d^2 \frac{\beta_{q,N}^4}{\delta^4 e^{- 4 \kappa (R+R_0)}} 
\end{align}
is enough to ensure that:
\begin{align*}
    q^{-1/4} \sqrt{\lfloor q/d \rfloor} \delta e^{- \kappa (R+R_0)} \geq \beta_{q,N} .
\end{align*}
Then, by~\cref{ass:decrease} for $(z^i)_{1 \leq i \leq N} \in (\r^q )^N$ with data separation $q^{-1/4} \sqrt{ \lfloor q/d \rfloor } \delta e^{- \kappa ( R+R_0)}$ we have:
\begin{align*}
    \forall 1 \leq i < j \leq N, \ | k_q ( \v z^i - z^j \v ) | \leq \frac{1}{2N} .
\end{align*}
Thus, the kernel matrix $\mathbb{K} = (k_q (\v z^i - z^j \v) \Id_q)_{i,j}$ is diagonally dominant with:
\begin{align*}
    \lambda_{\min}(\mathbb{K}) \geq 1 - \frac{N-1}{2N} \geq \frac{1}{2} ,
\end{align*}
and by definition of $\lambda$ in~\cref{rkhs_init_cond}:
\begin{align} \label{lambda_bound2}
    \lambda ( \sigma_{\min}(A_q)^{-1} \delta^{-1} e^{ \kappa (R+R_0)}  ) \geq \frac{1}{2}.
\end{align}
Moreover, $\Lambda \leq N$ because $k_q$ is bounded by $1$.

Let $x \in B(0,r_0)$ and assume $z$ is a solution of~\cref{rkhs_forward} for the control parameter $v^0_q$ and with initial condition $A_q x$. We have at time $t=1$:
\begin{align*}
    z_1 = A_q x + \int_0^1 (v^0_q)_t(z_t) \text{d}t ,
\end{align*}
so that by triangle inequality and~\cref{ass:admissibility}:
\begin{align*}
    \v z_1 - A_q x \v \leq \kappa \v v^0_q \v_{L^2} ,
\end{align*}
and then because $\v v^0_q \v \leq R_0 q^{-1/4}$ and the dataset is compactly supported:
\begin{align*}
    \v F(v^0_q,x) \v & = \v B_q z_1 \v \\
    & \leq \v B_q A_q x \v + \v B_q (z_1 - A_q x) \v \\
    & \leq \v B_q A_q \v_2 r_0 + \kappa R_0 ,
\end{align*}
with $B_q A_q$ independent of $q$. Thus $L(v^0_q) \leq C$ for some constant $C$ independent of $q$, $N$ and $\delta$.

Finally:
\begin{align} \label{sigma_bound2}
    \frac{\sigma_{\max}(B_q^\top)}{\sigma_{\min}(B_q^\top)^2} = q^{-1/4} ,
\end{align}
and putting~\cref{lambda_bound2} and~\cref{sigma_bound2} into the l.h.s.~\cref{rkhs_init_cond} gives:
\begin{align*}
    \frac{2\sqrt{2} \sigma_{\max}(B_q^\top) \sqrt{N \Lambda L(0)}  e^{3 \kappa (R+R_0)}}{\sigma_{\min}(B_q^\top)^2 \lambda (\sigma_{\min}(A_q)^{-1} \delta^{-1} e^{- \kappa (R+R_0)}) }
    \leq 4 \sqrt{2 C} e^{3 \kappa (R+R_0)} \frac{N}{q^{1/4}} .
\end{align*}
Considering $R > 0$ is fixed (c.f.~\cref{rq:choice_R}), \cref{thm:main} can be applied as soon as:
\begin{align} \label{embedding_gen_qbound2}
    q \geq  2^{10} C^2 e^{12 \kappa (R+R_0)} R^{-4} N^4 
\end{align}
and combining this bound with the one in~\cref{embedding_gen_qbound} gives the result.
\end{proof}

\begin{remark}[Choice of $R$] \label{rq:choice_R}
The proof of~\cref{prop:embedding_gen} holds for any fixed $R > 0$ whose choice impacts the result through the constant $C$. There is a trade-off between minimizing $e^{4 \kappa R}$ to have a better dependency of $q$ w.r.t. $\delta^{-1} \log(N)$ in~\cref{embedding_gen_qbound} and minimizing $R^{-1} e^{3 \kappa R}$ to have a better dependency w.r.t. $N$ in~\cref{embedding_gen_qbound2}. However, in any case, optimizing w.r.t. $R$ only improves the result up to a constant factor.
\end{remark}

\subsubsection{Scaling matrices} \label{subsubsec:scaling}
For $\alpha > 0$, we consider here the matrices:
\begin{align*}
    A = \alpha (\Id_d, 0)^\top \in \r^{(d+d') \times d} \quad \text{and} \quad B = \alpha (0, \Id_{d'}) \in \r^{d' \times (d+d')} .
\end{align*}
Then, in the proof of~\cref{prop:embedding_gen} one has $\sigma_{\min}(A) = \alpha$ and thus~\cref{lambda_bound2} holds as soon as:
\begin{align*}
    \alpha \geq \delta^{-1} e^{\kappa (R+R_0)} \beta_{d+d', N} .
\end{align*}
Moreover, $\sigma_{\max}(B^\top) = \sigma_{\min}(B^\top) = \alpha$ and $F(0, x) = 0$ for every input $x$ as $B A = 0$. Thus, with initialization $v^0 = 0$ the l.h.s. of~\cref{rkhs_init_cond} scales as:
\begin{align*}
    \frac{2\sqrt{2} \sigma_{\max}(B^\top) \sqrt{N \Lambda L(0)}  e^{3 \kappa R)}}{\sigma_{\min}(B^\top)^2 \lambda (\sigma_{\min}(A)^{-1} \delta^{-1} e^{- \kappa R}) }
    \leq 4 \sqrt{2 C} e^{3 \kappa R} \frac{N}{\alpha} = \mathcal{O}(1/\alpha) ,
\end{align*}
and global convergence holds for $\alpha = \Omega (\delta^{-1} \beta_{d+d', N} + N)$.

%Although it seems to be a very simple alternative the reason for us not to propose it is that model learned with those parameters is not likely to generalize. Indeed, when $B A = 0$ then $F(0, x) = 0$ for every input $x$ and \cref{rkhs_init_cond} is not invariant by rescaling of $A$ and $B$ anymore. Although interpolation of the training dataset can then be achieved for a suitable choice of scaling of the parameter, it has also been shown that this ``lazy regime'' can lead to bad generalization properties~\cite{chizat2019lazy}.

\subsection{Enforcing convergence with high dimensional embedding en finite dimensional kernels} \label{subsec:proof_qint}

We recover here the result of~\cref{prop:qint} for the more general kernel $k_q$. In particular notice that, as an application of Bochner's theorem~\cite{rudin2017fourier}, for every $q \geq 1$ there exists some probability measure $\mu_q$ over $\r^q$ such that:
\begin{align}
    \forall z \in \r^q, \ k_q( \v z \v ) = \int_{\r^q} e^{\imath \langle z, \omega \rangle} \text{d}\mu_q(\omega) .
\end{align}
Then, such as in~\cref{RFF} for the Matérn kernel, for any independent sampling $\omega^j \sim \mu_q$ of size $\qint$ one can consider the feature map:
\begin{align}
    \varphi : z \mapsto \left( e^{\imath \langle z, \omega^j \rangle } \right)_{1 \leq j \leq \qint} \in \mathbb{C}^{\qint} .
\end{align}
Such a feature map induces a structure of RKHS $\Hat{V}_q$ which is the set of residuals of~\cref{residual_set} with activation $\varphi$. The associated kernel is $\Hat{K}_q : (z,z') \mapsto \Hat{k}_q(z,z') \Id_q$ with:
\begin{align*}
    \forall z,z' \in \r^q, \ \Hat{k}_q(z,z') & \eqdef \langle \varphi(z), \varphi(z') \rangle \\
    & \xrightarrow{\qint \rightarrow +\infty} k_q(\v z-z' \v),
\end{align*}
almost surely, by the law of large numbers.

We make the following assumption on $\mu_q$:
\begin{assumption}[Moments of $\mu_q$] \label{ass:moments}
    The measure $\mu_q$ admits finite moments up to order $8$:
    \begin{align*}
        \mathbb{E}_{\mu_q} \left[ \prod_{j=1}^8 \left| \omega_{i_j} \right| \right] < \infty, \ \forall i_1, ..., i_8 \in \llbracket 1, q \rrbracket .
    \end{align*}
    Moreover, we assume those moments are independent of $q$.
\end{assumption}

Note that~\cref{ass:moments} implies regularity on the function $k_q$. Indeed by Fourier inversion theorem we have for every $r \in \r_+$ and every $\theta \in \mathbb{S}^{d-1}$:
\begin{align*}
    k_q(r) = \mathbb{E}_{\mu_q} \left[ e^{\imath r \langle \theta, \omega \rangle} \right] .
\end{align*}
By theorems of derivation under the integral $k_q$ is $8^{th}$-time differentiable on $\r_+$ and for $0 \leq  l \leq 8$:
\begin{align*}
    k_q^{(l)}(r) = \mathbb{E}_{\mu_q} \left[ (\imath \langle \theta, \omega \rangle)^l e^{\imath r \langle \theta, \omega \rangle} \right] .
\end{align*}
In particular, $k_q$ is four time differentiable at $0$ and:
\begin{align*}
    k'(0) & = \mathbb{E}_{\mu_q} \left[ \imath \langle \theta, \omega \rangle  \right] \\
    k^{(3)}(0) & = \mathbb{E}_{\mu_q} \left[ - \imath \langle \theta, \omega \rangle^3  \right]
\end{align*}
Therefore, $k_q'(0)$ and $k_q^{(3)}(0)$ are in $\imath \r \cap \r = \lbrace 0 \rbrace$ and~\cref{prop:kappa} holds. Moreover, as the moments are independent of $q$, the associated $\kappa$ is also independent of $q$.

\begin{proposition} \label{prop:qint_gen}
Consider $q, N \geq 1$ and $\epsilon, \tau, R > 0$.

\textbf{(i)} Assume~\cref{ass:moments} is satisfied. For $\qint \geq \Omega(\tau q^8)$, with probability greater than $1-\tau^{-1}$, $\Hat{V}_q$ satisfies~\cref{ass:admissibility} with some $\Hat{\kappa} \leq \kappa + 1$.

\textbf{(ii)} For $\qint \geq \Omega(\epsilon^{-2} N^2 (q \log(\v A \v_2 r_0 + R) + \tau)) $, with probability greater than $1 - e^{-\tau}$, for any control parameter $v \in L^2(\left[ 0,1 \right], \Hat{V}_q)$ s.t. $\v v \v_{L^2} \leq R$ and any time $t \in \tset$:
\begin{align*}
    \lambda_{\min}(\Hat{\mathbb{K}} ( (z^i_t)_i ) ) \geq \lambda_{\min}(\mathbb{K} ( (z^i_t)_i ) ) - \epsilon ,
\end{align*}
where the $(z^i)_i$ are the solutions to~\cref{rkhs_forward} and $\Hat{\mathbb{K}}$, $\mathbb{K}$ are the kernel matrices associated with $\Hat{k}$ and $k$ respectively.
 %   \end{enumerate}
\end{proposition}

As~\cref{ass:moments} is satisfied for the Matérn kernel $k$ defined in~\cref{matern_kernel} as soon as $\nu > 4$, \cref{prop:qint} is a direct consequence of~\cref{prop:qint_gen}.

\begin{proof}[Proof of~\cref{prop:qint_gen}]

\textbf{Proof of (i)}
We already saw that thanks to the assumption on the moments of $\mu_q$, the RKHS $V_q$ associated with $k_q$ satisfies~\cref{ass:admissibility} with constant $\kappa$.

Then we want to prove that for sufficiently high $\qint$, the RKHS $\Hat{V}_q$ generated by the feature map $\varphi$ in~\cref{RFF}, satisfies~\cref{ass:admissibility}.

Let $v \in \Hat{V}_q$ be of the form:
\begin{align*}
    v : z \mapsto W \varphi(z)
\end{align*}
for some $W \in \r^{q \times \qint}$. For $z \in \r^q$, $\v \varphi (z) \v = 1$ and thus:
\begin{align*}
    \v v(z) \v & = \v W \varphi(z) \v  \leq \v W \v = \v v \v_{\Hat{V}_q} ,
\end{align*}
so that $\v v \v_{\infty} \leq \v v \v_{\Hat{V}_q}$.

Then $Dv(z) = W D\varphi(z)$ and by the law of large number we have for any $\theta \in \mathbb{S}^{q-1}$:
\begin{align*}
    \v D\varphi(z) \theta \v^2 & = \frac{1}{\qint} \sum_{j=1}^{\qint}  \sum_{1 \leq k,l \leq q} \omega^j_k \omega^j_l \theta_k \theta_l \\
    & = \frac{1}{\qint} \sum_{j=1}^{\qint}  \langle \omega^j, \theta \rangle^2 \\
    & \xrightarrow[]{\qint \rightarrow +\infty} \mathbb{E}_{\mu_q} \left[ \langle \omega, \theta \rangle^2 \right] = - k_q''(0) .
\end{align*}
Because $\mu_q$ admits finite fourth order moments, the rate of convergence can be controlled using Chebyshev's inequality. For every indices $k,l \in \llbracket 1, q \rrbracket$:
\begin{align*}
    \mathbb{P} \big( \left| \frac{1}{\qint} \sum_{j=1}^{\qint} \omega^j_k \omega^j_l - \mathbb{E}_{\mu_q} \left[ \omega_k \omega_l \right] \right| \geq \alpha/q \big) \leq \frac{q^2 \mathbb{E}_{\mu_q} \left[ \omega_k^2 \omega_l^2 \right]}{\alpha^2 \qint} .
\end{align*}
For $\qint \geq \Omega( \frac{q^4 \tau }{\alpha^2} ) $ we have with probability greater than $1 - \tau^{-1}$ that the above inequality is satisfied for every indices $k,l$. Thus for every $z \in \r^q$ and every $\theta \in \mathbb{S}^{q-1}$:
\begin{align*}
    \left| \v D\varphi(z)\theta \v^2 + k_q''(0) \right|
    & \leq \sum_{1 \leq k,l \leq q} | \theta_k\theta_l | 
    \Big| 
    \frac{1}{r} \sum_{j=1}^r  \omega^j_k \omega^j_l - \mathbb{E}_{\mu_q} \left[ \omega_k \omega_l \right] 
    \Big| \\
    &  \leq \sum_{1 \leq k,l \leq q} | \theta_k \theta_l | \frac{\alpha}{q} \\
    & \leq \alpha,
\end{align*}
using Chauchy-Schwarz inequality in the last line. We can thus conclude:
\begin{align*}
    \v D \varphi \v_{2,\infty}^2 \leq -k_q''(0) + \alpha .
\end{align*}

The same arguments holds for $D^2v(z) = W D^2 \varphi(z)$. For any $\theta \in \mathbb{S}^{q-1}$ we have:
\begin{align*}
    D^2 \varphi(z)(\theta,\theta) = \left( \frac{1}{\sqrt{\qint}} \sum_{1 \leq k,l \leq q} - e^{\imath \langle z, \omega^j \rangle } \omega^j_k \omega^j_l \theta_k \theta_l \right)_{1 \leq j \leq \qint} .
\end{align*}
Passing to the squared norm we get:
\begin{align*}
    \v  D^2 \varphi(z)(\theta,\theta) \v^2
    & = \frac{1}{\qint}\sum_{j=1}^{\qint} \sum_{1 \leq k,l,s,t \leq q} \omega^j_k \omega^j_l \omega^j_s \omega^j_t \theta_k \theta_l \theta_s \theta_t \\
    & \xrightarrow[]{\qint \rightarrow +\infty} \sum_{1 \leq k,l,s,t \leq q} \mathbb{E}_{\mu_q} \left[ \omega_k \omega_l \omega_s \omega_t \right] \theta_k \theta_l \theta_s \theta_t \\
    & = \mathbb{E}_{\mu_q} \left[ \langle \omega, \theta \rangle^4 \right] =  k_q^{(4)}(0) .
\end{align*}
Then because $\mu_q$ admits $8^{th}$ order moments, we can control the convergence in probability by Chebyshev's inequality. For $\qint \geq \Omega ( \frac{q^8 \tau }{\alpha^2 } )$ we have with probability greater than $1- \tau^{-1}$:
\begin{align*}
    \v D^2 \varphi \v_{2,\infty}^2 \leq k_q^{(4)}(0) + \alpha .
\end{align*}

Finally $\Hat{V}_q$ satisfies~\cref{ass:admissibility} with:
\begin{align*}
    \Hat{\kappa} \leq (k_q(0))^{1/2} + (-k_q''(0))^{1/2} + (k_q^{(4)}(0))^{1/2} + 1
\end{align*}
for $\alpha$ sufficiently low.

\textbf{Proof of (ii). }
For $t \in \tset$, we consider $(z_t^i)_i$ the solutions of of~\cref{rkhs_forward} for some control parameter $v \in L^2(\tset, \Hat{V}_q)$ and we introduce the kernel matrices:
\begin{align*}
    \Hat{\mathbb{K}}_t  = ( \Hat{K}_q (z_t^i, z_t^j) )_{1 \leq i,j \leq N},  \:
    \mathbb{K}_t = ( K_q(z_t^i, z_t^j) )_{1 \leq i,j \leq N}.
\end{align*}
Using the first point, we know that if $\v v \v_{L^2} \leq R$, then
$\v z^i_t \v \leq \v A x^i \v + (\kappa+1)R$.
Then, using Theorem 1 in~\cite{sriperumbudur2015optimal}, we have for every indices $i,j$ and every $t \in \tset$:
\begin{align*}
    \mathbb{P} \Big( | \Hat{k}(z^i_t, z^j_t) - k(\v z^i_t - z^j_t \v) | \geq \frac{h(q,R) + \sqrt{2\tau}}{\sqrt{\qint}} \Big) \leq e^{-\tau} ,
\end{align*}
with $h(q,R) \eqdef \mathcal{O}(\sqrt{q \log(\v A \v_2 r_0 + R)})$. Thus, choosing $\qint \geq \Omega \left( \epsilon^{-2} N^2 ( q \log(\v A \v_2 r_0 + R) + \tau ) \right)$, we have with probability greater than $1 - e^{- \tau}$, $\lambda_{\min}(\Hat{\mathbb{K}}_t) \geq \lambda_{\min} \left( \mathbb{K}_t \right) - \epsilon$, for any $t \in \tset$ .

\end{proof}

Note that the assumption of finite $8^{th}$ moments is only needed to control the convergence rate of $\Hat{k}_q$ towards $k_q$ in probability. By the law of large numbers, assuming finite $4^{th}$-order moments is sufficient to have convergence almost surely. Also, we used the Chebyshev's inequality in order to control the convergence rate. Making stronger assumptions on the decay of $\mu_q$ (e.g. sub-gaussianity) could have led to faster convergence by using sharper concentration inequalities.

\subsection{Example of appropriate kernels} \label{subsec:kernels}
We show here that the Matérn kernel of parameter $\nu \in (8, +\infty]$ satisfies~\cref{ass:decrease} and~\cref{ass:moments}.

\myparagraph{Gaussian kernel.}
The Gaussian kernel defined by for some parameter $\sigma > 0$ by $k_q(r) = e^{- \frac{\sigma^2 r^2}{2}}$. In this case the frequency distribution $\mu_q$ is the multivariate normal of variance $\sigma$ and has a density given for every $\omega \in \r^q$ by:
\begin{align*}
    \mu_q(\omega) = \frac{1}{(2 \pi \sigma^2)^{q/2}} e^{- \frac{\v \omega \v^2}{2 \sigma^2} } ,
\end{align*}
This distribution admits finite moments of every order which are independent of $q$. Also, $k_q$ is four times differentiable at $0$ and by~\cref{prop:kappa} the associated $V_q$ is (strongly) admissible with
$
    \kappa = 2 + \sqrt{3}
$

Moreover~\cref{ass:decrease} as one has $| k_q(x) | \leq 1/2N$ if:
\begin{align*}
    x \geq \beta_{q,N} = \frac{2}{\sigma^2} \sqrt{\log(2N)}.
\end{align*}

\myparagraph{Matérn kernel.}
Sobolev spaces $H^s(\r^q,\r^q)$ are RKHSs as soon as $s > q/2$. Given some $\nu > 0$, the kernel $k_q$ associated with $H^{(q/2 + \nu)}(\r^q,\r^q)$ is independent of $q$ and is defined in~\cref{matern_kernel}. It is associated with the multivariate t-distribution:
\begin{align*}
    \mu_q( \omega ) = C(q,\nu) ( 1 + \frac{\v \omega \v^2}{2 \nu} )^{- (\nu + q/2)} ,
\end{align*}
for some normalising constant $C(q,\nu)$. Therefore, $\mu_q$ admits $l^{th}$ order moments as soon as $\nu \geq l/2 $, and those moments are bounded independently of $q$ (see~\cite{review_multivariate} for the computation of moments). In particular, for $\nu > 2$, $k_q$ is four times differentiable at $0$ with $k''(0) = \nu/(\nu-1)$ and $k^{(4)}(0) = 3\nu^2/(\nu-1)(\nu-2)$. Thus by~\cref{prop:kappa}, $V_q$ is (strongly) admissible with:
\begin{align*}
    \kappa = 1 + \sqrt{\frac{\nu}{(\nu-1)}} + \sqrt{\frac{3 \nu^2}{(\nu-1)(\nu-2)}} .
\end{align*}

Because $k_q$ has exponential decay (see~\cite{laforgia2010some}), there exist constants $H_\nu, G_\nu$ such that:
\begin{align*}
     | k_q(r) | \leq G_\nu e^{- H_\nu^{-1} r} 
\end{align*}
and~\cref{ass:decrease} is satisfied with
\begin{align*}
    \beta_{q,N} = H_\nu \log(2 G_\nu N).
\end{align*}

\begin{remark}[Sampling] \label{rk:sampling}
Sampling over $\mu_q$ can be achieved using that for $Y \sim \mathcal{N}(0,\Id_q)$ and for $u$ distributed according to $\chi^2_{2 \nu}$, the chi-squared distribution with $2 \nu$ degrees of freedom, $Y/\sqrt{u/2\nu}$ is distributed according to $\mu_q$.  
\end{remark}

\section{\modelname as a generalization of linear networks} \label{sec:linear}

In an attempt to better understand the convergence properties of GD in the training of ResNets, lots of attention has first been brought towards the study of linear models, for which the training dynamic is now well understood~\cite{hardt_identity_2018, bartlett2018gradient, zou2019global}. We explain here in what extent our work can be seen, at least formally, as a generalization of these results to a more general class of ResNets. In this purpose, we highlight the similarity between~\cref{thm:main}, which applies to the whole class of models described by~\cref{def:rkhs_resnet}, and~\cite[Theorem 3.1.]{zou2019global}, which only applies to linear ResNets.

More precisely, \cite{zou2019global} studies model of the form:
\begin{align} \label{linear}
    F(W,x) \eqdef B (\Id + \frac{1}{D} W_D) ... (\Id + \frac{1}{D} W_1) A x ,
\end{align}
where $x \in \r^{d}$ is the input data, $W = (W_1, ..., W_D) \in (\r^{q \times q})^D$ is the trained parameter and $A \in \r^{q \times d}$, $B \in \r^{d' \times q}$ are fixed matrices. Taking the limit of infinite depth $D \rightarrow +\infty$ in the above model motivates the following definition for linear Neural ODE models:

\begin{definition}[Linear-NODE] \label{def:linear}
Let $A \in \r^{q\times d}$ and $B \in \r^{d' \times q}$ be fixed matrices. Then for $W \in L^2([0,1], \r^{q \times q})$ and input $x \in \r^d$, the Linear-NODE output is given by $F(W, x) \eqdef B U_1 A x$, where $U$ is the solution to the following \emph{forward problem}:
\begin{align*}
    \Dot{U}_t = W_t U_t, \quad \text{and} \quad U_0 = \Id_{\r^q} .
\end{align*}
\end{definition}

One sees that the ResNet $F$ has residual terms that are linear w.r.t. the parameters and thus fits in the framework of our analysis. More precisely, the Linear-NODE of~\cref{def:linear} can be seen as a special instance of \modelname of~\cref{def:rkhs_resnet} with space of residual defined as:
\begin{align*}
    V \eqdef \lbrace v : z \mapsto W z, \ W \in \r^{q \times q} \rbrace .
\end{align*}
This corresponds to~\cref{residual_set} with the choice of feature map $\varphi = \Id :\r^q \rightarrow \r^q$. The set of residuals $V$ is then of course a RKHS for the Frobenius metric on matrices. In particular $V$ satisfies  an analog of~\cref{ass:admissibility} in the sense that for $(v : z \mapsto Wz) \in V$:
\begin{align*}
    \max \lbrace \sup_{\v z \v = 1} \Vert v(z) \Vert, \sup_{\v z \v =1} \Vert Dv (z) \Vert, \sup_{\v z \v = 1} \Vert D^2 v(z) \Vert \rbrace \leq \Vert W \Vert = \Vert v \Vert_V . 
\end{align*}
Universality (\cref{ass:universality}) is also satisfied on full-rank data matrices. If $Z = (z^1|...|z^N) \in \r^{q \times N}$ then the associated kernel matrix verifies:
\begin{align*}
    \lambda_{\min} ( \mathbb{K}((z^i)) ) & = \lambda_{\min} ( Z^\top Z ) = \sigma_{\min}(Z)^2, \\
    \lambda_{\max} ( \mathbb{K}((z^i)) ) & = \lambda_{\max} ( Z^\top Z ) = \sigma_{\max}(Z)^2 .
\end{align*}

As in our above presentation we consider training Linear-NODE for the minimization of the empirical risk associated to the square euclidean distance on the output space $\r^{d'}$. Given data matrices $X = (x^1 | ... | x^N) \in  \r^{d \times N}$ for the input and $Y = (y^1|...|y^N) \in \r^{d' \times N}$ for the output, we aim at finding a control parameter minimizing the risk defined for every $W \in L^2([0,1], \r^{q \times q})$ as:
\begin{align*}
    L(W) \eqdef \frac{1}{2N} \sum_{i=1}^N \Vert F(W,x^i) - y^i \Vert =  \frac{1}{2N} \Vert B U_1 A X - Y \Vert^2 .
\end{align*}
One difference with the previous analysis is that one can not expect the empirical risk to reach the value~$0$ if the target data $Y$ is not in the linear span of the input $X$. We are thus interested in minimizing the excess risk defined as:
\begin{align*}
    \Tilde{L}(W) \eqdef L(W) - L^*
\end{align*}
with $L^* \eqdef \inf_{U \in \r^{q \times q}} \frac{1}{2N} \Vert BUAX - Y \Vert^2$.

Following the line of the proof of~\cref{prop:PL_rkhs}, one can then show that the excess risk $\Tilde{L}$ associated to our Linear-NODE model verifies the following (local) PL property:
\begin{align*}
    \forall W \in L^2([0,1], \r^{q \times q}), \quad 2 m(\Vert W \Vert) \Tilde{L}(W) \leq \Vert \nabla \Tilde{L}(W) \Vert^2 \leq 2 M(\Vert \Tilde{W} \Vert) \Tilde{L}(W),
\end{align*}
where $m$ and $M$ are given for $R \geq 0$ by:
\begin{align*}
    m(R) = \frac{1}{N} \sigma_{\min}(B^\top)^2 \sigma_{\min}(A)^2 \sigma_{r}(X)^2 e^{-2R}, \quad
    M(R) =  \frac{1}{N} \sigma_{\max}(B^\top)^2 \sigma_{\max}(A)^2 \sigma_{\max}(X)^2 e^{2R} ,
\end{align*}
with $\sigma_r(X)$ the smallest positive singular value of $X$. Hence, in the same way local PL implies local convergence for a general RKHS $V$ (\cref{thm:main}), convergence in the linear case follows as an application of~\cref{thm:convergence_gen}:

\begin{theorem}[analog to Theorem 3.1. in~\cite{zou2019global}]
Let $W_0$ be some control parameter initialization with norm $\Vert W_0 \Vert = R_0$ and assume there exists some $R > 0$ s.t.:
\begin{align*}
    \sqrt{8} \frac{\sigma_{\max}(B^\top) \sigma_{\max}(A) \sigma_{\max}(X) }{\sigma_{\min}(B^\top)^2 \sigma_{\min}(A)^2 \sigma_r(X)^2 } \sqrt{ L(W_0) - L^*} \leq R e^{-3(R+R_0)}
\end{align*}
then, for a sufficiently small step-size $\eta$, GD initialized at $W_0$ converges towards a global minimizer of $L$ with linear convergence rate.
\end{theorem}

\end{document}